\newcommand{\cd}{\cdot}
\newcommand{\tx}{\tilde{x}}
\newcommand{\X}{{\mathcal{X}}}
\renewcommand{\H}{{\mathcal{H}}}
\newcommand{\E}{{\mathbb{E}}}
\newcommand{\R}{\mathbb{R}}
\newtheorem{theorem}{Theorem}
\newtheorem{lemma}{Lemma} 
\newtheorem{remark}{Remark}
\newtheorem{assumption}{Assumption}
\title{Convergence guarantees for kernel-based quadrature rules in misspecified settings}
\author{
  Motonobu Kanagawa$^*$,\ \ \ \ Bharath K Sriperumbudur$^\dagger$,\ \ \ \  Kenji Fukumizu$^*$\\
  $^*$The Institute of Statistical Mathematics, Tokyo 190-8562, Japan \\ 
  $^\dagger$Department of Statistics, Pennsylvania State University, University Park, PA 16802, USA\\
  \texttt{kanagawa@ism.ac.jp},\ \ \ \ \texttt{bks18@psu.edu},\ \ \ \  \texttt{fukumizu@ism.ac.jp} \\
}
\begin{document}

\maketitle

\begin{abstract}
Kernel-based quadrature rules are becoming important in machine learning and statistics, as they achieve super-$\sqrt{n}$ convergence rates in numerical integration, and thus provide alternatives to Monte Carlo integration in challenging settings where integrands are expensive to evaluate or where integrands are high dimensional. These rules are based on the assumption that the integrand has a certain degree of smoothness, which is expressed as that the integrand belongs to a certain reproducing kernel Hilbert space (RKHS). However, this assumption can be violated in practice (e.g., when the integrand is a black box function), and no general theory has been established for the convergence of kernel quadratures in such misspecified settings. Our contribution is in proving that kernel quadratures can be consistent even when the integrand does not belong to the assumed RKHS, i.e.,  when the integrand is less smooth than assumed. Specifically, we derive convergence rates that depend on the (unknown) lesser smoothness of the integrand, where the degree of smoothness is expressed via powers of RKHSs or via Sobolev spaces.
\end{abstract}

\section{Introduction}
\label{sec:intro}
\vspace{-2mm}
Numerical integration, or quadrature, is a fundamental task in the construction of various statistical and machine learning algorithms.
For instance, in Bayesian learning, numerical integration is generally required for the computation of marginal likelihood in model selection, and for the marginalization of parameters in fully Bayesian prediction, etc \citep{RamWil06}.
It also offers flexibility to probabilistic modeling, since, e.g.,~it enables us to use a prior that is not conjugate with a likelihood function.

Let $P$ be a (known) probability distribution on a measurable space $\X$ and $f$ be an integrand on $\X$. 
Suppose that the integral $\int f(x)dP(x)$ has no closed form solution.
One standard form of numerical integration is to approximate the integral as a weighted sum of function values $f(X_1),\dots,f(X_n)$ by appropriately choosing the points $X_1,\dots,X_n \in \X$ and weights $w_1,\dots,w_n \in \R$:
\begin{equation} \label{eq:intro_numint}
\sum_{i=1}^n w_i f(X_i) \approx \int f(x)dP(x).
\end{equation}
For example, the simplest Monte Carlo method generates the points $X_1,\dots,X_n$ as an i.i.d.\ sample from $P$ and uses equal weights $w_1 = \cdots = w_n = 1/n$.
Convergence rates of such Monte Carlo methods are of the form $n^{-1/2}$, which can be slow for practical purposes.
For instance, in situations where the evaluation of the integrand requires heavy computations,  $n$ should be small and Monte Carlo would perform poorly; such situations typically appear in modern scientific and engineering applications, and thus quadratures with faster convergence rates are desirable  \cite{OatGirCho17}.

One way of achieving faster rates is to exploit one's prior knowledge or assumption about the integrand (e.g.~the degree of smoothness) in the construction of a weighted point set $\{ (w_i,X_i) \}_{i=1}^n$.
Reproducing kernel Hilbert spaces (RKHS) have been successfully used for this purpose, with examples being Quasi Monte Carlo (QMC) methods based on RKHSs \citep{Hic98} and Bayesian quadratures \citep{Oha91}; see e.g.\ \cite{DicKuoSlo13,BriOatGirOsbSej15} and references therein. 
We will refer to such methods as {\em kernel-based quadrature rules} or simply {\em kernel quadratures} in this paper.
A kernel quadrature assumes that the integrand $f$ belongs to an RKHS consisting of smooth functions (such as Sobolev spaces), and constructs the weighted points $\{ (w_i,X_i) \}_{i=1}^n$ so that the {\em worst case error} in that RKHS is small.
Then the error rate of the form $n^{-b},\,b\ge 1$, which is much faster than the rates of Monte Carlo methods, will be guaranteed 
with $b$ being a constant representing the degree of smoothness of the RKHS (e.g., the order of differentiability).
Because of this nice property, kernel quadratures have been studied extensively in recent years \citep{CheWelSmo10,BacJulObo12,BriOatGirOsb15,Bac15,OatGir16} and 
have started to find applications in machine learning and statistics \citep{YanSinAvrMah14,LacLinBac15,GerCho15,BriOatGirOsbSej15}.

However, if the integrand $f$ does {\em not} belong to the assumed RKHS (i.e.\ if $f$ is less smooth than assumed), there is no known theoretical guarantee for fast convergence rate or even the consistency of kernel quadratures.
Such misspecification is likely to happen if one does not have the full knowledge of the integrand---such situations typically occur when the integrand is a {\em black box function}.
As an illustrative example, let us consider the problem of illumination integration in computer graphics (see e.g.~Sec.~5.2.4 of \citep{BriOatGirOsbSej15}).
The task is to compute the total amount of light arriving at a camera in a virtual environment.
This is solved by numerical integration with integrand $f(x)$ being the intensity of light arriving at the camera from a direction $x$ (angle).
The value $f(x)$ is only given by simulation of the environment for each $x$, so $f$ is a black box function.
In such a situation, one's assumption on the integrand can be misspecified.
Establishing convergence guarantees for such misspecified settings has been recognized as an important open problem in the literature \cite[Section 6]{BriOatGirOsbSej15}.
\vspace{-2mm}

\paragraph{Contributions.}
The main contribution of this paper is in providing general convergence guarantees for kernel-based quadrature rules in misspecified settings.
Specifically, we make the following contributions:\vspace{-1.5mm}
\begin{itemize}
\item In Section \ref{sec:random}, we prove that consistency can be guaranteed even when the integrand $f$ does not belong to the assumed RKHS.
Specifically, we derive a convergence rate of the form $n^{- \theta b}$, where $0 < \theta \leq 1$ is a constant characterizing the (relative) smoothness of the integrand.
In other words, the integration error decays at a speed depending on the (unknown) smoothness of the integrand.
This guarantee is applicable to kernel quadratures that employ {\em random} points.\vspace{-.5mm}

\item We apply this result to QMC methods called {\em lattice rules} (with randomized points) and the quadrature rule by Bach \cite{Bac15}, for the setting where the RKHS is a Korobov space.
We show that even when the integrand is less smooth than assumed, the error rate becomes the same as for the case when the (unknown) smoothness is known; namely, we show that these 
methods are {\em adaptive} to the unknown smoothness of the integrand.\vspace{-.5mm}

\item In Section \ref{sec:sobolev}, we provide guarantees for kernel quadratures with 
{\em deterministic} points, by focusing on specific cases where the RKHS is a Sobolev space $W_2^r$ of order $r \in \mathbb{N}$ (the order of differentiability).
We prove that consistency can be guaranteed even if $f\in W^s_2\backslash W^r_2$ where $s\le r$, i.e., the integrand $f$ belongs to a Sobolev space $W_2^s$ of lesser smoothness. 
We derive a convergence rate of the form $n^{-bs/r}$, where the ratio $s/r$ determines the relative degree of smoothness.\vspace{-.5mm}

\item As an important consequence, we show that if weighted points $\{ (w_i,X_i) \}_{i=1}^m$ achieve the optimal rate in $W_2^r$, then they also achieve the optimal rate in $W_2^s$.
In other words, to achieve the optimal rate for the integrand $f$ belonging to $W_2^s$, one does {\em not} need to know the smoothness $s$ of the integrand; one only needs to know its upper-bound $s \leq r$.

\end{itemize}

This paper is organized as follows.
In Section 2, we describe kernel-based quadrature rules, and formally state the goal and setting of theoretical analysis in Section 3.
We present our contributions in Sections 4 and 5.
Proofs are collected in the supplementary material.\vspace{-2mm}

\paragraph{Related work.}
Our work is close to \cite{OatGir16} in spirit, which discusses situations where the true integrand is {\em smoother} than assumed (which is complementary to ours) and proposes a control functional approach to make kernel quadratures adaptive to the (unknown) greater smoothness.
We also note that there are certain quadratures which are adaptive to less smooth integrands \citep{Dic07,Dic08,Dic11}.
On the other hand, our aim here is to provide general theoretical guarantees that are applicable to a wide class of kernel-based quadrature rules. 

\paragraph{Notation.}
For $x \in \R^d$, let $\{x\} \in [0,1]^d$ be its fractional parts.
For a probability distribution $P$ on a measurable space $\X$, let $L_2(P)$ be the Hilbert space of square-integrable functions with respect to $P$.
If $P$ is the Lebesgue measure on $\X \subset \R^d$, let $L_2(\X) := L_2(P)$ and further $L_2 := L_2(\R^d)$.
Let $C_0^s := C_0^s(\R^d)$ be the set of all functions on $\R^d$ that are continuously differentiable up to order $s \in \mathbb{N}$ and vanish at infinity. 
Given a function $f$ and weighted points $\{ (w_i,X_i) \}_{i=1}^n$, 
$Pf := \int f(x)dP(x)$ and $P_n f := \sum_{i=1}^n w_i f(X_i)$ denote the integral and its numerical approximation, respectively.
\vspace{-2mm}


\section{Kernel-based quadrature rules} 
\label{sec:preliminary}
\vspace{-2mm}
Suppose that one has prior knowledge on certain properties of the integrand $f$ (e.g.~its order of differentiability).
A kernel quadrature exploits this knowledge by expressing it as that $f$ belongs to a certain RKHS $\H$ that possesses those properties, 
and then constructing weighted points $\{ (w_i, X_i) \}_{i=1}^n$ so that the error of integration is small for every function in the RKHS.
More precisely, it pursues the minimax strategy that aims to minimize the {\em worst case error} defined by
\begin{equation} \label{eq:wce}
e_n(P;\H) := \sup_{f \in \H: \| f \|_\H \leq 1} \left| Pf - P_n f \right| := \sup_{f \in \H: \| f \|_\H \leq 1} \left| \int f(x)dP(x) - \sum_{i=1}^n w_i f(X_i) \right|,
\end{equation}
where $\| \cd \|_\H$ denotes the norm of $\H$.
The use of RKHS is beneficial (compared to other function spaces), because it results in an analytic expression of the worst case error (\ref{eq:wce}) in terms of the reproducing kernel.
Namely, one can explicitly compute (\ref{eq:wce}) in the construction of $\{ (w_i, X_i) \}_{i=1}^n$ as a criterion to be minimized.
Below we describe this as well as examples of kernel quadratures.
\vspace{-2mm}

\subsection{The worst case error in RKHS} \label{sec:wce}

Let $\X$ be a measurable space and $\H$ be an RKHS of functions on $\X$ with $k:\X\times \X\to\R$ as the bounded reproducing kernel. By the reproducing property,
it is easy to verify that $Pf=\langle f,m_P\rangle_\H$ and $P_nf=\langle f,m_{P_n}\rangle_\H$ for all $f\in\H$, where $\left<\cd, \cd \right>_\H$ denotes the inner-product of $\H$, and
\begin{eqnarray} \label{eq:kmean}
m_P := \int k(\cd,x)dP(x) \in \H, \quad m_{P_n} :=  \sum_{i=1}^n w_i k(\cd,X_i) \in \H.\nonumber
\end{eqnarray}
Therefore, the worst case error (\ref{eq:wce}) can be written as the difference between $m_P$ and $m_{P_n}$:
\begin{eqnarray} \label{eq:wce_means}
\sup_{\| f \|_\H \leq 1} | Pf - P_n f| =  \sup_{\| f \|_\H \leq 1} \left<f, m_P - m_{P_n} \right>_\H = \| m_P - m_{P_n} \|_\H,
\end{eqnarray}
where $\| \cd \|_\H$ denotes the norm of $\H$ defined by $\| f \|_\H = \sqrt{\left<f,f \right>_\H}$ for $f \in \H$. 
From (\ref{eq:wce_means}), it is easy to see that for every $f \in \H$, the integration error $| P_n f - P f |$ is bounded by the worst case error:
\begin{equation}
| P_n f - P f |  \leq \| f \|_\H \| m_P - m_{P_n}\|_\H = \| f \|_\H e_n(P;\H).\nonumber
\end{equation}
We refer the reader to \cite{SriGreFukSchetal10,DicKuoSlo13} for details of these derivations. Using the reproducing property of $k$, the r.h.s.~of (\ref{eq:wce_means}) can be alternately written as:
\begin{equation} \label{eq:kmean_analytic}
e_n(P;\H) = \sqrt{ \int \int k(x,\tx)dP(x)dP(\tx) - 2 \sum_{i=1}^n w_i \int k(x,X_i)dP(x) + \sum_{i=1}^n \sum_{j=1}^n w_i w_j k(X_i,X_j).}
\end{equation}
The integrals in (\ref{eq:kmean_analytic}) are known in closed form for many pairs of $k$ and $P$; see e.g.\ Table 1 of \cite{BriOatGirOsbSej15}.
For instance, if $P$ is the uniform distribution on $\X = [0,1]^d$, and $k$ is the Korobov kernel described below, then $\int k(y,x)dP(x) = 1$ for all $y \in \X$. 
To pursue the aforementioned minimax strategy, one can explicitly use the formula (\ref{eq:kmean_analytic}) to minimize the worst case error (\ref{eq:wce}).
Often $\H$ is chosen as an RKHS consisting of smooth functions, and the degree of smoothness is what a user specifies; we describe this in the example below.

\subsection{Examples of RKHSs: Korobov spaces}
The setting $\X = [0,1]^d$ is  standard in the literature on numerical integration; see e.g.\ \cite{DicKuoSlo13}.
In this setting, {\em Korobov spaces} and {\em Sobolev spaces} have been widely used as RKHSs.\footnote{Korobov spaces are also known as {\em periodic Sobolev spaces} in the literature \citep[p.318]{Berlinet2004}.}
We describe the former here; for the latter, see Section \ref{sec:sobolev}.
\paragraph{Korobov space on $[0,1]$.}
The Korobov space $W_{\rm Kor}^\alpha ([0,1])$ of order $\alpha \in \mathbb{N}$ is an RKHS whose kernel $k_\alpha$ is given by
\begin{equation} \label{eq:Korobov_kernel}
k_\alpha(x,y) := 1 + \frac{(-1)^{\alpha-1}(2\pi)^{2\alpha}}{(2\alpha)!} B_{2\alpha}(\{ x-y \}),
\end{equation}
where $B_{2\alpha}$ denotes the $2\alpha$-th Bernoulli polynomial. $W_{\rm Kor}^\alpha ([0,1])$ consists of periodic functions on 
$[0,1]$ whose derivatives up to $(\alpha-1)$-th are absolutely continuous and the $\alpha$-th derivative belongs to $L_2([0,1])$ \citep{NovWoz08}.
Therefore the order $\alpha$ represents the degree of smoothness of functions in $W_{\rm Kor}^\alpha ([0,1])$.
\paragraph{Korobov space on $[0,1]^d$.}
For $d \geq 2$, the kernel of the Korobov space is given as the product of one-dimensional kernels (\ref{eq:Korobov_kernel}):
\begin{equation} \label{eq:Korobov_ker_d}
k_{\alpha,d} (x,y) := k_\alpha(x_1,y_1)\cdots k_\alpha(x_d,y_d),\,\, x := (x_1,\dots,x_d)^T,\,\, y := (y_1,\dots,y_d)^T \in [0,1]^d.
\end{equation}
The induced Korobov space $W_{\rm Kor}^\alpha ([0,1]^d)$ on $[0,1]^d$ is then the tensor product of one-dimensional Korobov spaces: $W_{\rm Kor}^\alpha ([0,1]^d) := W_{\rm Kor}^\alpha ([0,1]) \otimes \cdots \otimes W_{\rm Kor}^\alpha ([0,1])$.
Therefore it consists of functions having square-integrable mixed partial derivatives up to the order $\alpha$ in {\em each} variable. 
This means that by using the kernel (\ref{eq:Korobov_ker_d}) in the computation of 
(\ref{eq:kmean_analytic}), one can make an assumption that the integrand $f$ has smoothness of degree $\alpha$ in each variable.
In other words, one can incorporate one's knowledge or belief on $f$ into the construction of weighted points $\{(w_i,X_i)\}$ via the choice of $\alpha$.

\subsection{Examples of kernel-based quadrature rules} \label{sec:rev_meth}
We briefly describe examples of kernel-based quadrature rules.\vspace{-2mm}
\paragraph{Quasi Monte Carlo (QMC).} 
These methods typically focus on the setting where $\X = [0,1]^d$ with $P$ being the uniform distribution on $[0,1]^d$, and employ equal weights $w_i = \cdots = w_n = 1/n$.
Popular examples are {\em lattice rules} and {\em digital nets/sequences}.
Points $X_1,\dots,X_n$ are selected in a deterministic way so that the worst case error (\ref{eq:kmean_analytic}) is as small as possible. 
Then such deterministic points are often {\em randomized} to obtain unbiased integral estimators, as we will explain in Section \ref{sec:ass_int}.
For a review of these methods, see \citep{DicKuoSlo13}.

For instance, lattice rules generate $X_1,\dots,X_n$ in the following way (for simplicity assume $n$ is prime).
Let $z \in \{ 1,\dots,n-1 \}^d$ be a generator vector. Then the points are defined as $X_i = \{ i z / n \} \in [0,1]^d$ for $i=1,\dots,n$.
Here $z$ is selected so that the resulting worst case error (\ref{eq:wce}) becomes as small as possible.
The CBC (Component-By-Component) construction is a fast method that makes use of the formula (\ref{eq:kmean_analytic}) to achieve this; see Section 5 of \cite{DicKuoSlo13} and references therein. 
Lattice rules applied to the Korobov space $W_{\rm Kor}^\alpha ([0,1]^d)$ can achieve the 
rate $e_n(P, W_{\rm Kor}^\alpha ([0,1]^d) = O(n^{- \alpha + \xi}  )$ for the worst case error with $\xi > 0$ arbitrarily small \cite[Theorem 5.12]{DicKuoSlo13}.
\vspace{-3mm}

\paragraph{Bayesian quadratures.}
These methods are applicable to general $\X$ and $P$, and employ non-uniform weights. 
Points $X_1,\dots,X_n$ are selected either deterministically or randomly.
Given the points being fixed, weights $w_1,\dots,w_n$ are obtained by minimizing (\ref{eq:kmean_analytic}), which can be done by solving a linear system of size $n$.
Such methods are called Bayesian quadratures, since the resulting estimate $P_n f$ in this case is exactly the posterior mean of the integral $Pf$ given ``observations'' $\{ (X_i, f(X_i) )\}_{i=1}^n$, with the prior on the integrand $f$ being Gaussian Process with the covariance kernel $k$.
We refer to \cite{BriOatGirOsbSej15} for these methods. 

For instance, the algorithm by Bach \cite{Bac15} proceeds as follows, for the case of $\H$ being a Korobov space $W_{\rm Kor}^\alpha ([0,1]^d)$ and $P$ being the uniform distribution on $[0,1]^d$: 
(i) Generate points $X_1,\dots,X_n$ independently from the uniform distribution on $[0,1]^d$; 
(ii) Compute weights $w_1,\dots,w_n$ by minimizing (\ref{eq:kmean_analytic}), with the constraint $\sum_{i=1}^n w_i^2 \leq 4/n$. 
Bach \cite{Bac15} proved that this procedure gives the error rate $e_n(P, W_{\rm Kor}^\alpha ([0,1]^d) = O(n^{- \alpha + \xi}  )$ for $\xi$ > 0 arbitrarily small.\footnote{Note that in \cite{Bac15}, the degree of smoothness is expressed in terms of $s := \alpha d$.}
\vspace{-2mm}
\section{Setting and objective of theoretical analysis} \label{sec:setting}
\vspace{-2mm}
We now formally state the setting and objective of our theoretical analysis in general form.
Let $P$ be a known distribution and $\H$ be an RKHS.
Our starting point is that weighted points $\{(w_i, X_i)\}_{i=1}^n$ are already constructed for each $n \in \mathbb{N}$ by some quadrature rule\footnote{Note that here the weighted points should be written as $\{(w_i^{(n)}, X_i^{(n)})\}_{i=1}^n$, since they are constructed depending on the number of points $n$. However, we just write as $\{(w_i, X_i)\}_{i=1}^n$ for notational simplicity.}, and that these provide consistent approximation of $P$ in terms of the worst case error:
\begin{equation} \label{eq:setting_kmean}
e_n(P;\H) = \| m_P - m_{P_n} \|_\H = O(n^{-b}) \quad (n \to \infty),
\end{equation} 
where $b > 0$ is some constant.
Here we do not specify the quadrature algorithm explicitly, to establish results applicable to a wide class of kernel quadratures simultaneously.

Let $f$ be an integrand that is {\em not} included in the RKHS: $f \notin \H$.
Namely, we consider a {\em misspecified setting}.
Our aim is to derive convergence rates for the integration error
\begin{equation}
| P_n f - P f | = \left| \sum_{i=1}^n w_i f(X_i) - \int f(x)dP(x) \right| \nonumber
\end{equation}
based on the assumption (\ref{eq:setting_kmean}).
This will be done by assuming a certain regularity condition on $f$, which expresses (unknown) lesser smoothness of $f$. For example, this is the case when the weighted points are constructed by assuming the Korobov space of order $\alpha \in \mathbb{N}$, but the integrand $f$ belongs to the Korobov space of order $\beta < \alpha$: in this case, $f$ is less smooth than assumed. 
As mentioned in Section 1, such misspecification is likely to happen if $f$ is a black box function.
But misspecification also occurs even when one has the full knowledge of $f$.
As explained in Section \ref{sec:wce}, the kernel $k$ should be chosen so that the integrals in (\ref{eq:kmean_analytic}) allow analytic solutions w.r.t.~$P$.
Namely, the distribution $P$ determines an available class of kernels (e.g.~Gaussian kernels for a Gaussian distribution), and therefore the RKHS of a kernel from this class may not contain the integrand of interest. 
This situation can be seen in application to random Fourier features \cite{YanSinAvrMah14}, for example.
\vspace{-2mm}

\section{Analysis 1: General RKHS with random points} \label{sec:random}
\vspace{-2mm}

We first focus on kernel quadratures with random points.
To this end, we need to introduce certain assumptions on (i) the construction of weighted points $\{ (w_i,X_i) \}_{i=1}^n$ and on (ii) the smoothness of the integrand $f$; we discuss them in Sections \ref{sec:ass_samp} and \ref{sec:ass_int}, respectively.
In particular, we introduce the notion of {\em powers of RKHSs} \citep{SteSco12} in Section \ref{sec:ass_int}, which enables us to characterize the (relative) smoothness of the integrand.
We then state our main result in Section \ref{sec:random_result}, and illustrate it with QMC lattice rules (with randomization) and the Bayesian quadrature by Bach \cite{Bac15} in Korobov RKHSs.
\vspace{-2mm}

\subsection{Assumption on random points $X_1,\dots,X_n$} \label{sec:ass_samp}
\vspace{-2mm}
\begin{assumption} \label{as:distributions_general}
There exists a probability distribution $Q$ on $\X$ satisfying the following properties: (i) $P$ has a bounded density function w.r.t.\ $Q$; (ii) there is a constant $D>0$ independent of $n$, such that
\begin{equation} \label{eq:distributionQ_gen}
  \left( \E \left[ \frac{1}{n} \sum_{i=1}^n g^2(X_i) \right] \right)^{1/2} \leq  D \| g \|_{L_2(Q)}, \quad \forall g \in L_2(Q),
\end{equation}
where  $\E[\cd]$ denotes the expectation w.r.t.\ the joint distribution of $X_1,\dots,X_n$.
\end{assumption}

Assumption \ref{as:distributions_general} is fairly general, as it does not specify any distribution of points $X_1,\dots,X_n$, but just requires that the expectations over these points satisfy (\ref{eq:distributionQ_gen}) for some distribution $Q$ (also note that it allows the points to be dependent).
For instance, let us consider the case where $X_1,\dots,X_n$ are independently generated from a user-specified distribution $Q$; in this case, $Q$ serves as a proposal distribution.
Then (\ref{eq:distributionQ_gen}) holds for $D=1$ with equality.
Examples in this case include the Bayesian quadratures by Bach \cite{Bac15} and Briol et al.\ \cite{BriOatGirOsbSej15} with random points.

Assumption \ref{as:distributions_general} is also satisfied by QMC methods that apply randomization to deterministic points, which is common in the literature \citep[Sections 2.9 and 2.10]{DicKuoSlo13}.
Popular methods for randomization are {\em random shift} and {\em scrambling}, both of which satisfy Assumption \ref{as:distributions_general} for $D=1$ with equality, where $Q (\ =P)$ is the uniform distribution on $\X = [0,1]^d$.
This is because in general, randomization is applied to make an integral estimator unbiased: $\E[\frac{1}{n}\sum_{i=1}^n f(X_i)] = \int_{[0,1]^d} f(x) dx$  \citep[Section 2.9]{DicKuoSlo13}. 
For instance, the random shift is done as follows.
Let $x_1,\dots,x_n \in [0,1]^d$ be deterministic points generated by a QMC method.
Let $\Delta$ be a random sample from the uniform distribution on $[0,1]^d$.
Then each $X_i$ is given as $X_i := \{ x_i + \Delta \} \in [0,1]^d$.
Therefore $\E[\frac{1}{n} \sum_{i=1}^n g^2(X_i)] = \int_{[0,1]^d}g^2(x) dx = \int g^2(x) dQ(x)$ for all $g \in L_2(Q)$, so (\ref{eq:distributionQ_gen}) holds for $D=1$ with equality.
\subsection{Assumption on the integrand via powers of RKHSs} \label{sec:ass_int}
To state our assumption on the integrand $f$, we need to introduce {\em powers of RKHSs} \citep[Section 4]{SteSco12}.
Let $0 < \theta \leq 1$ be a constant. First, with the distribution $Q$ in Assumption \ref{as:distributions_general}, we require that the kernel satisfies
\begin{equation*} \label{eq:kenrel_cond}
\int k(x,x) dQ(x) < \infty.
\end{equation*}
For example, this is always satisfied if the kernel is bounded.
We also assume that the support of $Q$ is entire $\X$ and that $k$ is continuous.
These conditions imply  {\em Mercer's theorem} \cite[Theorem 3.1 and Lemma 2.3]{SteSco12}, which guarantees the following expansion of the kernel $k$:
\begin{equation} \label{eq:Mercer}
k(x,y) = \sum_{i=1}^\infty \mu_i e_i(x) e_i(y), \quad x,y \in \X,
\end{equation}
where $\mu_1 \geq \mu_2 \geq \cdots > 0$ and $\{ e_i \}_{i=1}^\infty$ is an orthonormal series in $L_2(Q)$; in particular, $\{\mu_i^{1/2} e_i \}_{i=1}^\infty$ forms an orthonormal basis of $\H$.
Here the convergence of the series in (\ref{eq:Mercer}) is pointwise. Assume that $\sum_{i=1}^\infty \mu_i^{\theta} e_i^2(x) < \infty$ holds for all $x \in \X$.
Then {\em the $\theta$-th power of the kernel} $k$ is a function $k^\theta: \X \times \X \to \R$ defined by
\begin{equation} \label{eq:power_kernel}
k^\theta(x,y) := \sum_{i=1}^\infty \mu_i^\theta e_i(x) e_i(y), \quad x,y \in \X.
\end{equation}
This is again a reproducing kernel \cite[Proposition 4.2]{SteSco12}, and defines an RKHS called {\em the $\theta$-th power of the RKHS} $\H$:
\begin{equation*} 
\H^\theta = \left\{ \sum_{i=1}^\infty a_i \mu_i^{\theta/2} e_i: \sum_{i=1}^\infty a_i^2 < \infty \right\}.
\end{equation*} 
This is an intermediate space between $L_2(Q)$ and $\H$, and the constant $0 < \theta \leq 1$ determines how close $\H^\theta$ is to $\H$.
For instance, if $\theta = 1$ we have $\H^\theta = \H$, and  $\H^\theta$ approaches $L_2(Q)$ as $\theta \to +0$. Indeed,  $\H^\theta$ is nesting w.r.t.\ $\theta$: 
\begin{equation} \label{eq:nest_pow}
 \H = \H^1 \subset \H^\theta \subset \H^{\theta'}\subset L_2(Q), \quad {\rm for\ all} \ \ 0 < \theta' < \theta < 1.
\end{equation}
In other words, $\H^\theta$ gets larger as $\theta$ decreases. If $\H$ is an RKHS consisting of smooth functions, then $\H^\theta$ contains less smooth functions than those in $\H$; we will show this in the example below.
\begin{assumption} \label{as:power-rkhs}
The integrand $f$ lies in $\H^\theta$ for some $0<\theta\leq 1$. 
\end{assumption}
We note that Assumption~\ref{as:power-rkhs} is equivalent to assuming that $f$ belongs to the {\em interpolation space} $[L_2(Q), \H]_{\theta,2}$, or lies in the range of a power of certain integral operator \cite[Theorem 4.6]{SteSco12}. \vspace{-2mm}
\paragraph{Powers of Tensor RKHSs.}
Let us mention the important case where RKHS $\H$ is given as the tensor product of individual RKHSs $\H_1,\dots,\H_d$ on the spaces $\X_1,\dots,\X_d$, i.e.,\ $\H = \H_1 \otimes \cdots \otimes \H_d$ and $\X = \X_1\times \cdots \times \X_d$.
In this case, if the distribution $Q$ is the product of individual distributions $Q_1,\dots,Q_d$ on $\X_1,\dots,X_n$, it can be easily shown that the power RKHS $\H^\theta$ is the tensor product of individual power RKHSs $\H_i^\theta$:
\begin{equation} \label{eq:power_tensorRKHS}
\H^\theta = \H_1^\theta \otimes \cdots \otimes \H_d^\theta.
\end{equation}
\vspace{-8mm}
\paragraph{Examples: Powers of Korobov spaces.}
Let us consider the Korobov space $W_{\rm Kor}^{\alpha} ([0,1]^d)$ with $Q$ being the uniform distribution on $[0,1]^d$.
The Korobov kernel (\ref{eq:Korobov_kernel}) has a Mercer representation
\begin{equation} \label{eq:Mercer_Korobov}
 k_\alpha(x,y) = 1 + \sum_{i = 1}^\infty \frac{1}{i^{2\alpha}} [c_i(x)c_i(y) + s_i(x) s_i(y)],
\end{equation}
where $c_i(x) := \sqrt{2} \cos 2 \pi ix$ and $s_i (x) := \sqrt{2} \sin 2 \pi i x$. Note that $c_0(x) := 1$ and $\{ c_i, s_i \}_{i=1}^\infty$ constitute an orthonormal basis of $L_2([0,1])$.
From (\ref{eq:power_kernel}) and (\ref{eq:Mercer_Korobov}), the $\theta$-th power of the Korobov kernel $k_\alpha$ is given by
\begin{equation}
 k_\alpha^\theta(x,y) = 1 + \sum_{i = 1}^\infty \frac{1}{i^{2\alpha \theta}} [c_i(x)c_i(y) + s_i(x) s_i(y)].\nonumber
\end{equation}
If $\alpha \theta \in \mathbb{N}$, this is exactly the kernel $k_{\alpha \theta}$ of the Korobov space $W_{\rm Kor}^{\alpha \theta} ([0,1])$ of lower order $\alpha \theta$. 
In other words, $W_{\rm Kor}^{\alpha \theta} ([0,1])$ is nothing but the $\theta$-power of $W_{\rm Kor}^{\alpha} ([0,1])$.
From this and (\ref{eq:power_tensorRKHS}), we can also show that the $\theta$-th power of $W_{\rm Kor}^{\alpha} ([0,1]^d)$ is  $W_{\rm Kor}^{\alpha \theta} ([0,1]^d)$ for $d \geq 2$.

\subsection{Result: Convergence rates for general RKHSs with random points} \label{sec:random_result}
The following result guarantees the consistency of kernel quadratures for integrands satisfying  Assumption~\ref{as:power-rkhs}, i.e., $f \in \H^\theta$.
\begin{theorem} \label{theo:general}
Let $\{(w_i, X_i) \}_{i=1}^n$ be such that $\E [ e_n(P;\H) ] = O(n^{-b})$ for some $b > 0$ and $\E [\sum_{i=1}^n w_i^2] = O(n^{-2c})$ for some $0 < c \leq 1/2$, as $n \to \infty$.
Assume also that $\{X_i\}_{i=1}^n$ satisfies Assumption \ref{as:distributions_general}.
Let $0 < \theta \leq 1$ be a constant. Then for any $f \in \H^\theta$, we have
\begin{equation} \label{eq:rate_general}
\E \left[ \left| P_n f - P f \right| \right] = O(n^{- \theta b + (1/2 - c) (1 -  \theta) }) \quad (n \to \infty). 
\end{equation}
\end{theorem}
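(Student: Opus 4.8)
The plan is to exploit that $\H^\theta$ is an interpolation space sitting between $L_2(Q)$ and $\H$, and to split the integrand into a ``smooth'' part that is handled by the worst-case-error bound of Section~\ref{sec:wce} and a ``rough'' part that is controlled directly in $L_2(Q)$. Concretely, I would expand $f = \sum_{i\geq 1} f_i e_i$ in the $L_2(Q)$-orthonormal Mercer basis, with $f_i := \langle f, e_i\rangle_{L_2(Q)}$; Assumption~\ref{as:power-rkhs} ($f\in\H^\theta$) then reads $\|f\|_{\H^\theta}^2 = \sum_i f_i^2/\mu_i^\theta < \infty$. For a threshold $\lambda>0$ set $g_\lambda := \sum_{i:\,\mu_i\geq\lambda} f_i e_i$ and $h_\lambda := f - g_\lambda = \sum_{i:\,\mu_i<\lambda} f_i e_i$. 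Thresholding on the eigenvalues $\mu_i$ (rather than truncating at a fixed index) is what makes the final optimization clean, since $\lambda$ is then a genuinely continuous tuning parameter.

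The two norm estimates are the heart of the decomposition. Since $\mu_i\geq\lambda$ on the support of $g_\lambda$,
\[
\|g_\lambda\|_\H^2 = \sum_{\mu_i\geq\lambda} \frac{f_i^2}{\mu_i} = \sum_{\mu_i\geq\lambda}\frac{f_i^2}{\mu_i^\theta}\,\mu_i^{-(1-\theta)} \leq \lambda^{-(1-\theta)}\|f\|_{\H^\theta}^2 ,
\]
so $g_\lambda\in\H$ with $\|g_\lambda\|_\H\leq \lambda^{-(1-\theta)/2}\|f\|_{\H^\theta}$; symmetrically, since $\mu_i<\lambda$ on the support of $h_\lambda$,
\[
\|h_\lambda\|_{L_2(Q)}^2 = \sum_{\mu_i<\lambda} f_i^2 = \sum_{\mu_i<\lambda}\frac{f_i^2}{\mu_i^\theta}\,\mu_i^{\theta}\leq \lambda^{\theta}\|f\|_{\H^\theta}^2 .
\]
Both $g_\lambda$ and $h_\lambda$ are deterministic and lie in $\H^\theta$, hence are pointwise-defined, so $P_n g_\lambda$ and $P_n h_\lambda$ are meaningful.

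I would then bound the two expected errors separately. For the smooth part the inequality $|P_n g_\lambda - P g_\lambda|\leq \|g_\lambda\|_\H\, e_n(P;\H)$ holds pointwise in the randomness, so $\E[|P_n g_\lambda - P g_\lambda|] \leq \|g_\lambda\|_\H\,\E[e_n(P;\H)] = O(n^{-b})\,\lambda^{-(1-\theta)/2}$ by the hypothesis $\E[e_n(P;\H)]=O(n^{-b})$. For the rough part, $\E[|P_n h_\lambda - P h_\lambda|]\leq \E[|P_n h_\lambda|] + |P h_\lambda|$; the second term is controlled by the bounded density of $P$ w.r.t.\ $Q$, namely $|P h_\lambda| = |\int h_\lambda\,\tfrac{dP}{dQ}\,dQ|\leq \|\tfrac{dP}{dQ}\|_{L_2(Q)}\,\|h_\lambda\|_{L_2(Q)} = O(1)\,\|h_\lambda\|_{L_2(Q)}$, while Cauchy--Schwarz together with $\E[\sum_i w_i^2]=O(n^{-2c})$ and Assumption~\ref{as:distributions_general} gives
\[
\E[|P_n h_\lambda|] \leq \Big(\E\Big[\sum_i w_i^2\Big]\Big)^{1/2}\Big(\E\Big[\sum_i h_\lambda^2(X_i)\Big]\Big)^{1/2} \leq O(n^{-c})\cdot \sqrt{n}\,D\,\|h_\lambda\|_{L_2(Q)} = O(n^{1/2-c})\,\|h_\lambda\|_{L_2(Q)} .
\]
Because $c\leq 1/2$, this dominates the $|Ph_\lambda|$ term, so $\E[|P_n h_\lambda - P h_\lambda|] = O(n^{1/2-c})\,\lambda^{\theta/2}$.

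Combining, $\E[|P_nf - Pf|] \leq O(n^{-b})\lambda^{-(1-\theta)/2} + O(n^{1/2-c})\lambda^{\theta/2}$, and it remains to optimize over $\lambda$. Balancing the two terms yields $\lambda^{1/2} \asymp n^{-(b+1/2-c)}$, i.e.\ $\lambda \asymp n^{-(2b+1-2c)}$, which tends to $0$ as required since $b>0$ and $c\leq 1/2$; back-substitution produces the exponent $-\theta b + (1/2-c)(1-\theta)$, as claimed. The point demanding most care — the main obstacle — is the interface between the abstract spectral decomposition and the pointwise/$L_2(Q)$ arguments: one must check that $h_\lambda=f-g_\lambda$ is legitimately a pointwise-defined function to which both Assumption~\ref{as:distributions_general} and the density bound apply, and that the Mercer expansion and the power-RKHS norm identities are valid under the stated conditions. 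Beyond that, the proof reduces to the two displayed norm estimates and a one-parameter optimization.
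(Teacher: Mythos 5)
Your proposal is correct, and it reproduces the paper's overall architecture: decompose $f$ at a spectral scale $\lambda_n$ into a surrogate lying in $\H$ plus a remainder small in $L_2(Q)$; bound the surrogate's integration error by $\|\cdot\|_{\H}\,\E[e_n(P;\H)]$; bound the remainder by the double Cauchy--Schwarz argument combining $\E[\sum_i w_i^2]=O(n^{-2c})$ with Assumption \ref{as:distributions_general}; control the bias $|Ph_\lambda|$ via the bounded density and note it is dominated because $c\leq 1/2$; and choose $\lambda_n = n^{-2b+2c-1}$ (which you derive by balancing, where the paper simply postulates it). The genuine difference is in how the surrogate is built. The paper uses Tikhonov regularization, $f_{\lambda_n} := (T+\lambda_n I)^{-1}Tf$ with $T$ the integral operator of $k$ on $L_2(Q)$, imports the two key estimates $\|f-f_{\lambda_n}\|_{L_2(Q)} \leq \lambda_n^{\theta/2}\|T^{-\theta/2}f\|_{L_2(Q)}$ and $\|f_{\lambda_n}\|_{\H}\leq \lambda_n^{-(1-\theta)/2}\|T^{-\theta/2}f\|_{L_2(Q)}$ from Smale and Zhou \cite{SmaZho07}, and invokes $\H^\theta = \mathcal{R}(T^{\theta/2})$ from \cite{SteSco12}; you instead use a hard spectral cut-off $g_\lambda = \sum_{\mu_i\geq\lambda} f_i e_i$ and prove the exactly analogous estimates $\|g_\lambda\|_{\H}\leq \lambda^{-(1-\theta)/2}\|f\|_{\H^\theta}$ and $\|f-g_\lambda\|_{L_2(Q)}\leq \lambda^{\theta/2}\|f\|_{\H^\theta}$ in two lines each, and these are correct since $\{\mu_i^{1/2}e_i\}$ and $\{\mu_i^{\theta/2}e_i\}$ are orthonormal bases of $\H$ and $\H^\theta$ respectively. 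What your route buys is self-containedness --- no appeal to regularization theory, and $g_\lambda$ is even a finite linear combination of eigenfunctions (the eigenvalues of the trace-class $T$ accumulate only at $0$), so $g_\lambda\in\H$ is immediate; what the paper's route buys is the connection to the standard source-condition machinery, phrasing $f\in\H^\theta$ as a range condition in a form that generalizes to other spectral filters. The interface issue you flag --- applying Assumption \ref{as:distributions_general} and the density bound to a pointwise version of $h_\lambda$ --- is real but no worse than in the paper's own proof, and is settled by the standing assumption $\sum_i \mu_i^\theta e_i^2(x)<\infty$, under which the expansion of $f\in\H^\theta$ converges pointwise and fixes a canonical version; finally, your single bound on $\E[|P_n h_\lambda - Ph_\lambda|]$ is just the paper's terms $(A)$ and $(C)$ merged, an equivalent rearrangement of the same triangle inequality.
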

 
\begin{remark}  
(a) The expectation in the assumption $\E [e_n(P;\H)] = O(n^{-b})$ is w.r.t.\ the joint distribution of the weighted points $\{ (w_i, X_i) \}_{i=1}^n$.

(b) The assumption $\E[\sum_{i=1}^n w_i^2] = O(n^{-2c})$ requires that each weight $w_i$ decreases as $n$ increases. 
For instance, if $\max_{i \in \{1,\dots,n \}} |w_i| = O(n^{-1})$, we have $c = 1/2$.
For QMC methods, weights are uniform $w_i = 1/n$, so we always have $c = 1/2$.
The quadrature rule by Bach \cite{Bac15} also satisfies $c = 1/2$; see Section \ref{sec:rev_meth}.

(c) Let $c = 1/2$. 
Then the rate in (\ref{eq:rate_general}) becomes $O(n^{- \theta b})$, which shows that the integral estimator $P_n f$ is consistent, even when the integrand $f$ does not belong to $\H$ (recall $\H \subsetneq \H^\theta$ for $\theta < 1$; see also (\ref{eq:nest_pow})).
The resulting rate $O(n^{- \theta b})$ is determined by $0 < \theta \leq 1$ of the assumption $f \in \H^\theta$, which characterizes the closeness of $f$ to $\H$.

(d) When $\theta = 1$ (well-specified case), irrespective of the value of $c$, the rate in (\ref{eq:rate_general}) becomes $O(n^{-b})$, which recovers the rate of the worst case error $\E [ e_n(P;\H) ] = O(n^{-b})$.
\end{remark}
\vspace{-3mm}

\begin{wrapfigure}[11]{r}[1pt]{5cm}
\vspace{-5mm}
\begin{center}
 \includegraphics[width=5cm]{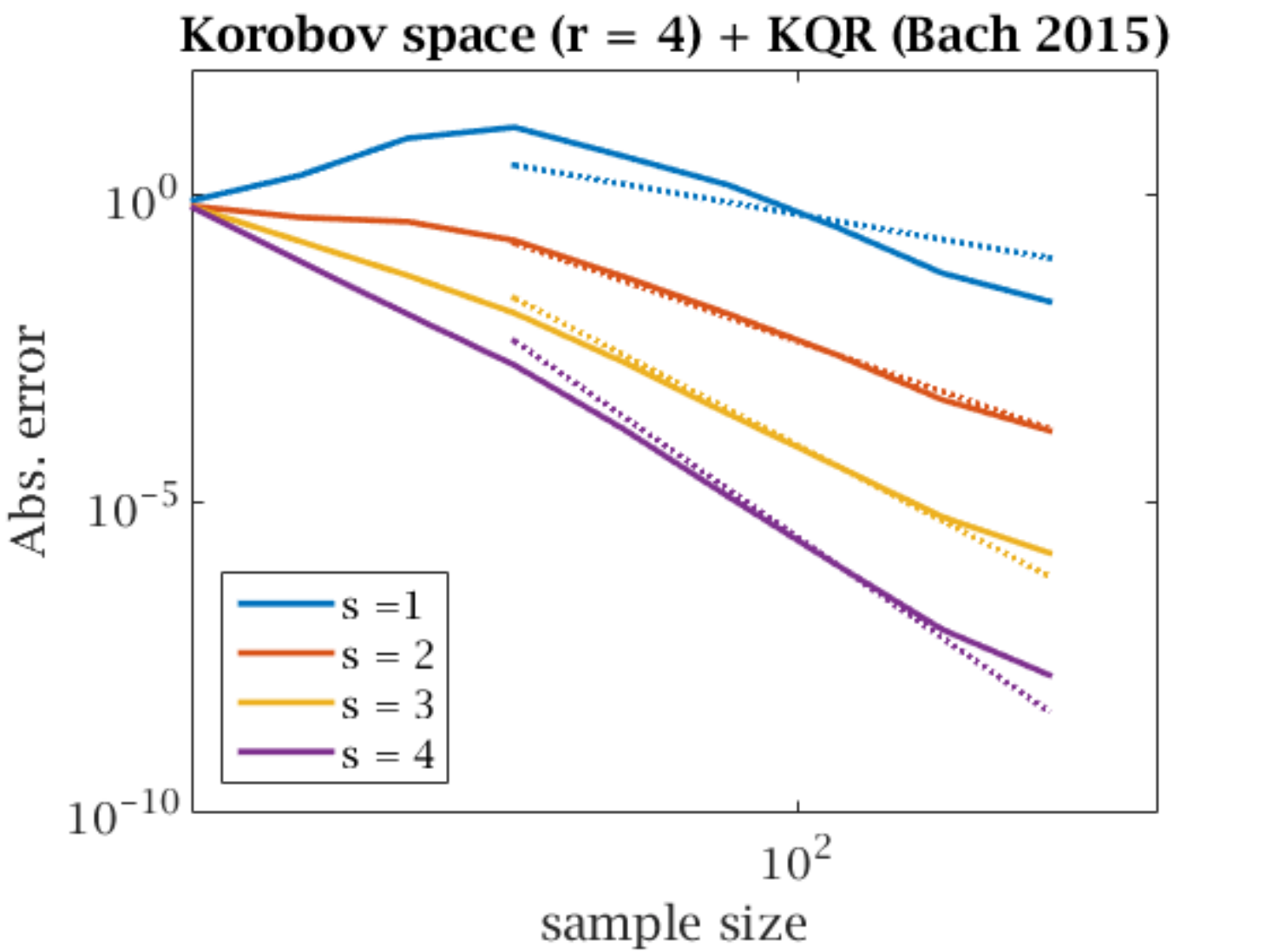}
\end{center}
\vspace{-5mm}
\caption{Simulation results}
\label{fig:bach}
\end{wrapfigure}

\paragraph{Examples in Korobov spaces.}
Let us illustrate Theorem \ref{theo:general} in the following setting described earlier.
Let  $\X = [0,1]^d$,  $\H = W_{\rm Kor}^\alpha([0,1]^d)$, and $P$ be the uniform distribution on $[0,1]^d$.
Then $\H^\theta = W_{\rm Kor}^{\alpha \theta}([0,1]^d)$, as discussed in Section \ref{sec:ass_int}. Let us consider the two methods discussed in Section \ref{sec:rev_meth}: (i) the  QMC lattice rules with randomization and (ii) the algorithm by Bach \cite{Bac15}.
For both the methods, we have $c = 1/2$, and the distribution $Q$ in Assumption \ref{as:distributions_general} is uniform on $[0,1]^d$ in this setting.
As mentioned before, these methods achieve the rate $n^{- \alpha + \xi}$ for arbitrarily small $\xi > 0$ in the well-specified setting: $b = \alpha - \xi$ in our notation.

Then the assumption $f \in \H^\theta$ reads $f \in W_{\rm Kor}^{\alpha\theta}([0,1]^d)$ for $0 < \theta \leq 1$.
For such an integrand $f$, we obtain the rate $O(n^{-\alpha \theta + \xi})$  in (\ref{eq:rate_general}) with arbitrarily small $\xi > 0$.
This is the {\em same} rate as for a well-specified case where $W_2^{\alpha \theta}([0,1]^d)$ was assumed for the construction of weighted points.
Namely, we have shown that these methods are adaptive to the unknown smoothness of the integrand.

For the algorithm by Bach \cite{Bac15}, we conducted simulation experiments to support this observation, by using code available from \verb|http://www.di.ens.fr/~fbach/quadrature.html|.
The setting is what we have described with $d = 1$, and weights are obtained without regularization as in \cite{Bac15}.
The result is shown in Figure \ref{fig:bach}, where $r\ (=\alpha)$ denotes the assumed smoothness, and $s\ (=\alpha \theta)$ is the (unknown) smoothness of an integrand.
The straight lines are (asymptotic) upper-bounds in Theorem \ref{theo:general}  (slope $- s$ and intercept fitted for $n \geq 2^4$), and the corresponding solid lines are numerical results (both in log-log scales). 
Averages over 100 runs are shown.
The result indeed shows the adaptability of the quadrature rule by Bach for the less smooth functions (i.e.\ $s = 1,2,3$). 
We observed similar results for the QMC lattice rules (reported in Appendix D in the supplement).
\vspace{-2mm}

\section{Analysis 2: Sobolev RKHS with deterministic points} \label{sec:sobolev}
\vspace{-2mm}
In Section \ref{sec:random}, we have provided guarantees for methods that employ random points.
However, the result does not apply to those with {\em deterministic} points, such as
(a) QMC methods without randomization, (b) Bayesian quadratures with deterministic points, and (c)  kernel herding  \citep{CheWelSmo10}.

We aim here to provide guarantees for quadrature rules with deterministic points.
To this end, we focus on the setting where $\X = \R^d$ and $\H$ is a {\em Sobolev space} \citep{AdaFou03}.
%
The Sobolev space $W_2^r$ of order $r \in \mathbb{N}$ is defined by
\begin{equation} \label{eq:sobolev_def}
W_2^r  := \{ f \in L_2: D^\alpha f  \in L_2\ \text{exists\,\,for\,\,all\,} | \alpha | \leq r \}\nonumber
\end{equation}
where $\alpha := (\alpha_1,\dots,\alpha_d)$ with $\alpha_i \geq 0$ is a multi-index with $| \alpha | := \sum_{i=1}^d \alpha_i$, and $D^\alpha f$ is the $\alpha$-th (weak) derivative of $f$.
Its norm is defined by
$\| f \|_{W_2^r} =( \sum_{ |\alpha | \leq r} \| D^\alpha f \|_{L_2}^2)^{1/2}$.
For $r > d/2$, this is an RKHS with the reproducing kernel $k$ being the {\em Mat\`{e}rn kernel}; see Section 4.2.1.\ of \cite{RamWil06} for the definition.

Our assumption on the integrand $f$ is that it belongs to a Sobolev space $W_2^s$ of a lower order $s \leq r$.
Note that the order $s$ represents the smoothness of $f$ (the order of differentiability).
Therefore the situation $s < r$ means that $f$ is {\em less} smooth than assumed; we consider the setting where $W_2^r$ was assumed for the construction of weighted points.
\vspace{-2mm}


\paragraph{Rates under an assumption on weights.}
The first result in this section is based on the same assumption on weights as in Theorem \ref{theo:general}.
\begin{theorem} \label{theo:sob_weight}
Let $\{ (w_i,X_i) \}_{i=1}^n$ be such that $e_n(P;W_2^r) = O(n^{-b})$ for some $b > 0$ and  $\sum_{i=1}^n w_i^2 = O(n^{-2c})$ for some $0 < c \leq 1/2$, as $n \to \infty$.
Then for any $f \in C_0^s \cap W_2^s$ with $s \leq r$, we have
\begin{equation} \label{eq:rate_sobw}
| P_n f - P f | = O( n^{ -bs/r + (1/2 - c) (1 - s/r)} ) \quad (n \to \infty).
\end{equation}
\end{theorem}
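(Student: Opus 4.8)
The plan is to mirror the error decomposition behind Theorem~\ref{theo:general}, but since the points are now deterministic, the $L_2(Q)$-type averaging that Assumption~\ref{as:distributions_general} provided in the random case is unavailable; instead I would control the relevant quantities in \emph{sup-norm}, which is exactly what the extra hypothesis $f\in C_0^s$ buys. Concretely, fix a mollifier $\phi$ on $\R^d$ whose Fourier transform $\hat\phi$ is smooth, compactly supported, and identically $1$ on a neighbourhood of the origin, and set $\phi_\lambda(\cd):=\lambda^{-d}\phi(\cd/\lambda)$ and $f_\lambda:=f*\phi_\lambda$ for a bandwidth $\lambda>0$ to be optimized at the end. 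Because $\hat\phi\equiv1$ near $0$, all moments $\int y^\beta\phi(y)\,dy$ vanish for $1\le|\beta|$, and $\hat{f_\lambda}=\hat f\,\hat\phi(\lambda\,\cd)$ is band-limited, so $f_\lambda\in W_2^r$ and the worst-case-error bound is applicable to it.

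The two ingredients I would establish are a sup-norm approximation bound and a Sobolev-norm growth bound. For the former, writing $f_\lambda(x)-f(x)=\int[f(x-y)-f(x)]\phi_\lambda(y)\,dy$ and Taylor-expanding $f(x-y)$ to order $s$, the intermediate terms of orders $1\le|\beta|\le s-1$ cancel by the vanishing moments, and the order-$s$ remainder is controlled by $\|D^\alpha f\|_\infty$ with $|\alpha|=s$ (finite since $f\in C_0^s$) times $\int|y|^s|\phi_\lambda(y)|\,dy=O(\lambda^s)$, giving $\|f-f_\lambda\|_\infty=O(\lambda^s)$. For the latter, in Fourier variables $\|f_\lambda\|_{W_2^r}^2=\int(1+|\omega|^2)^r|\hat f(\omega)|^2|\hat\phi(\lambda\omega)|^2\,d\omega$; factoring $(1+|\omega|^2)^r=(1+|\omega|^2)^s(1+|\omega|^2)^{r-s}$ and using that $\sup_\omega(1+|\omega|^2)^{r-s}|\hat\phi(\lambda\omega)|^2\le C\lambda^{-2(r-s)}$ for $\lambda\le1$ yields $\|f_\lambda\|_{W_2^r}=O(\lambda^{-(r-s)}\|f\|_{W_2^s})$.

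With these in hand I would split $|P_nf-Pf|\le|P_n(f-f_\lambda)|+|P_nf_\lambda-Pf_\lambda|+|P(f-f_\lambda)|$. The middle term is $\le\|f_\lambda\|_{W_2^r}\,e_n(P;W_2^r)=O(\lambda^{-(r-s)}n^{-b})$ by (\ref{eq:wce_means}) and the hypotheses; the last term is $\le\|f-f_\lambda\|_\infty=O(\lambda^s)$ because $P$ is a probability measure; and the first term, by Cauchy--Schwarz together with $\sum_i(f-f_\lambda)^2(X_i)\le n\|f-f_\lambda\|_\infty^2$, is bounded by $(\sum_iw_i^2)^{1/2}n^{1/2}\|f-f_\lambda\|_\infty=O(n^{1/2-c}\lambda^s)$, which dominates the last term since $c\le1/2$. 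Balancing the two surviving terms $n^{1/2-c}\lambda^s$ and $\lambda^{-(r-s)}n^{-b}$ forces $\lambda=n^{-(b+1/2-c)/r}$ (with $\lambda\to0$ since $b+1/2-c>0$), and substitution gives the exponent $-bs/r+(1/2-c)(1-s/r)$, which is exactly the claimed rate.

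The main obstacle is securing the \emph{clean} exponent $\lambda^s$ in the sup-norm estimate: a crude bound via $\|g\|_\infty\le\|\hat g\|_{L_1}$ with a spectral cutoff would only give $\lambda^{s-d/2}$ (and require $2s>d$), which does not reproduce the stated rate. It is precisely the hypothesis $f\in C_0^s$---boundedness of all derivatives up to order $s$---combined with the vanishing-moment mollifier that lets the Taylor argument attain the full $\lambda^s$; checking that this same $\phi$ simultaneously delivers the Sobolev-growth bound is the other delicate point, though it becomes routine once $\hat\phi$ is taken compactly supported.
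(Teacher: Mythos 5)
Your proposal is correct, and it reproduces the paper's proof skeleton exactly: the same three-term triangle inequality $|P_nf-Pf|\le|P_n(f-f_\lambda)|+|P_nf_\lambda-Pf_\lambda|+|P(f-f_\lambda)|$, the same two required estimates ($\|f-f_\lambda\|_{L_\infty}=O(\lambda^s)$ and $\|f_\lambda\|_{W_2^r}=O(\lambda^{-(r-s)})$), and the same bandwidth balancing (your $\lambda=n^{-(b+1/2-c)/r}$ is precisely the reciprocal of the paper's $\sigma_n=n^{(b-c+1/2)/r}$). Where you genuinely diverge is in how the smooth surrogate is built. The paper constructs $g_\sigma$ via Calder\'on's reproducing formula, $g_\sigma=\int_{1/\sigma}^\infty \psi_t*\psi_t*f\,\frac{dt}{t}$, with a band-limited, vanishing-moment function $\psi$ whose existence it proves in its Lemma 1; it then \emph{imports} the sup-norm bound $\|f-g_{\sigma}\|_{L_\infty}\le C\sigma^{-s}\|f\|_{C_0^s}$ from Proposition 3.7 of Narcowich--Ward, and proves the Sobolev growth bound (its Lemma 2) in Fourier variables using that the Calder\'on multiplier is bounded by $1$ and supported in $B(0,\sigma)$. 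You instead use a single-scale mollifier $f_\lambda=f*\phi_\lambda$ with $\hat\phi$ compactly supported and $\equiv 1$ near the origin, and prove both estimates from scratch: the vanishing moments (all derivatives of $\hat\phi$ vanish at $0$) plus a Taylor expansion of order $s$ give the clean $O(\lambda^s)$ sup-norm rate --- and you correctly identify that the naive $\|g\|_\infty\le\|\hat g\|_{L_1}$ route would lose $\lambda^{-d/2}$ --- while the compact support of $\hat\phi$ gives the multiplier bound $\sup_\omega(1+|\omega|^2)^{r-s}|\hat\phi(\lambda\omega)|^2=O(\lambda^{-2(r-s)})$. Your route is more self-contained and elementary: it dispenses with Calder\'on's formula, the existence lemma for $\psi$, and the external appeal to Narcowich--Ward. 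What the paper's heavier machinery buys is reuse: the same Narcowich--Ward framework supplies, for Theorem 3, a surrogate that additionally \emph{interpolates} $f$ at the points $X_1,\dots,X_n$ (killing term $(A)$ entirely), which a plain convolution mollifier cannot provide; for Theorem 2 alone, your argument suffices and is arguably cleaner.
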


\begin{remark} 
(a) Let $\theta := s/r$. Then the rate in (\ref{eq:rate_sobw}) is rewritten as $O(n^{- \theta b + (1/2-c)(1-\theta)})$, which matches the rate of Theorem \ref{theo:general}.
In other words, Theorem \ref{theo:sob_weight} provides a deterministic version of Theorem \ref{theo:general} for the special case of Sobolev spaces.

(b) Theorem \ref{theo:sob_weight} can be applied to quadrature rules with equally-weighted deterministic points, such as QMC methods and kernel herding \citep{CheWelSmo10}. 
For these methods, we have $c = 1/2$ and so we obtain the rate $O(n^{-sb/r})$ in (\ref{eq:rate_sobw}). The minimax optimal rate in this setting (i.e., $c=1/2$) is given by $n^{-b}$ 
with $b = r/d$ \citep{Nov88}. 
For these choices of $b$ and $c$, we obtain a rate of $O(n^{-s/d})$ in (\ref{eq:rate_sobw}), which is exactly the optimal rate in $W_2^s$. This leads to an important 
consequence that the optimal rate $O(n^{-s/d})$ can be achieved for an integrand $f \in W_2^s$ without knowing the degree of smoothness $s$; one just needs to know its upper-bound 
$s \leq r$.
Namely, any methods of optimal rates in Sobolev spaces are adaptive to lesser smoothness. 


\end{remark}
\vspace{-3mm}

\paragraph{Rates under an assumption on separation radius.}
Theorems \ref{theo:general} and \ref{theo:sob_weight} require the assumption $\sum_{i=1}^n w_i^2 = O(n^{-2c})$. 
However, for some algorithms, the value of $c$ may not be available. 
For instance, this is the case for Bayesian quadratures that compute the weights without any constraints \citep{BriOatGirOsbSej15}; see Section \ref{sec:rev_meth}.
Here we present a preliminary result that does not rely on the assumption on the weights.
To this end, we introduce a quantity called {\em separation radius}:
\begin{equation}
q_n := \min_{i \neq j} \| X_i - X_j \|. \nonumber
\end{equation}
In the result below, we assume that $q_n$ does not decrease very quickly as $n$ increases.
Let ${\rm diam} (X_1,\dots,X_n)$ denote the diameter of the points. 
\begin{theorem} \label{theo:sob_sepa}
Let $\{ (w_i,X_i) \}_{i=1}^n$ be such that $e_n(P;W_2^r) = O(n^{-b})$ for some $b>0$ as $n \to \infty$, $q_n \geq C n^{-b/r}$ for some $C > 0$, and  ${\rm diam} (X_1,\dots,X_n) \leq 1$. 
Then for any $f \in C_0^s \cap W_2^s$ with $s \leq r$, we have
\begin{equation} \label{eq:rate_sobs}
| P_n f - P f | = O(n^{-\frac{bs}{r}}) \quad (n \to \infty).
\end{equation}

\end{theorem}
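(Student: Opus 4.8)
The plan is to bound the integration error $|P_n f - Pf|$ by decomposing $f$ into a piece that lives in $W_2^r$ and a remainder, then controlling each piece separately. Concretely, for $f \in C_0^s \cap W_2^s$ with $s \le r$, I would construct a smoothed approximant $f_\delta \in W_2^r$ (for instance by mollification or by a band-limiting / scaling argument with a parameter $\delta > 0$), so that the standard approximation-theoretic estimates for Sobolev spaces give two bounds: a \emph{smoothing error} controlling how far $f_\delta$ is from $f$, of the order $\|f - f_\delta\|_{\infty}$ or $\|f - f_\delta\|_{L_2}$ behaving like $\delta^{s}$, and a \emph{norm-growth bound} $\|f_\delta\|_{W_2^r} = O(\delta^{-(r-s)})$ capturing the cost of promoting smoothness from $s$ to $r$. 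The triangle inequality then yields
\begin{equation} \label{eq:sepa_decomp}
|P_n f - Pf| \le |P_n f_\delta - P f_\delta| + |P_n (f - f_\delta)| + |P (f - f_\delta)|. \nonumber
\end{equation}

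First I would handle the middle term $|P_n f_\delta - P f_\delta|$ using the worst-case-error bound established in Section 2: since $f_\delta \in W_2^r$, we have $|P_n f_\delta - P f_\delta| \le \|f_\delta\|_{W_2^r}\, e_n(P;W_2^r) = O\big(\delta^{-(r-s)} n^{-b}\big)$. The third term $|P(f-f_\delta)|$ is immediate from the smoothing error, of order $\delta^{s}$. The crux — and the step I expect to be the main obstacle — is the first term $|P_n(f - f_\delta)| = |\sum_i w_i (f - f_\delta)(X_i)|$, because here the assumption on the weights has been dropped, so I cannot simply bound $\sum_i |w_i|$. This is precisely where the separation-radius hypothesis $q_n \ge C n^{-b/r}$ must enter. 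The idea is that the points $X_1,\dots,X_n$ are well-separated, so that only few points fall into any fixed region; combined with a local (e.g.\ pointwise or ball-wise) bound on $f - f_\delta$ of size $\delta^{s}$ and a control on the weights forced by the worst-case-error assumption through the reproducing-kernel structure, one can show $\sum_i |w_i (f-f_\delta)(X_i)|$ is controlled. A natural route is to use the separation radius to bound the number of points and to invoke a sampling/quadrature inequality (a Marcinkiewicz--Zygmund or stability estimate for scattered points with separation $q_n$ and diameter at most $1$) that converts the discrete sum into an integral-type quantity of order $\delta^{s}$.

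The final step is optimization over the free parameter $\delta$. Balancing the dominant contributions — the term $\delta^{-(r-s)} n^{-b}$ against $\delta^{s}$ — one chooses $\delta \sim n^{-b/r}$, which gives each term the order $n^{-bs/r}$, yielding the claimed rate $O(n^{-bs/r})$ in \eqref{eq:rate_sobs}. I expect the separation-radius choice $q_n \ge C n^{-b/r}$ to be calibrated exactly so that at the optimal $\delta \sim n^{-b/r}$ the scale of the point separation matches the smoothing scale, ensuring the scattered-point sampling inequality can be applied at the relevant resolution. The principal technical difficulty is establishing that quantitative sampling inequality for the first term under only the separation and diameter constraints (rather than an explicit weight bound); once that estimate is in hand with the correct powers of $\delta$ and $q_n$, the rest is the routine triangle-inequality bookkeeping and the balancing of $\delta$ described above.
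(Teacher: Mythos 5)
Your decomposition, your bounds for the second and third terms, and your balancing scale $\delta \sim n^{-b/r}$ all match the paper's proof (which sets a bandwidth $\sigma_n \sim n^{b/r}$, the inverse of your $\delta$). But the step you yourself flag as the crux --- the term $|P_n(f - f_\delta)| = |\sum_i w_i (f - f_\delta)(X_i)|$ --- is a genuine gap, and the route you sketch does not close it. With no hypothesis on the weights, a sup-norm bound $\|f - f_\delta\|_\infty = O(\delta^s)$ buys you nothing: the $w_i$ can be arbitrarily large with cancellations. You cannot route around this through the worst-case-error assumption either, since $e_n(P;W_2^r)$ only controls $\langle g, m_{P_n} - m_P\rangle_{W_2^r}$ for $g \in W_2^r$, and $f - f_\delta$ is exactly the part that is not in $W_2^r$ with controlled norm. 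Even the most natural quantitative version of your idea fails on the exponents: from $e_n = O(n^{-b})$ one gets $w^\top K w = \|m_{P_n}\|_{W_2^r}^2 = O(1)$, and Narcowich--Ward-type stability bounds for Sobolev/Mat\`ern kernels give $\lambda_{\min}(K) \gtrsim q_n^{2r-d}$, hence $\sum_i w_i^2 = O(n^{b(2r-d)/r})$; Cauchy--Schwarz then bounds your first term by $O\bigl(n^{1/2 + b(2r-d)/(2r)}\,\delta^s\bigr)$, which cannot be balanced against $\delta^{-(r-s)}n^{-b}$ to yield $n^{-bs/r}$ for any choice of $\delta$. A Marcinkiewicz--Zygmund inequality does not help for the same reason: such inequalities control sums with prescribed (e.g.\ positive or near-uniform) weights, not adversarial signed weights.

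The paper's key idea, which your proposal is missing, is to choose the approximant so that the problematic term vanishes identically: invoking Theorems 3.5 and 3.10 of \cite{NarWar04}, one constructs a band-limited $f_{\sigma_n} \in W_2^r$ that \emph{interpolates} $f$ at the quadrature nodes, $f_{\sigma_n}(X_i) = f(X_i)$ for all $i$, while still satisfying $\| f - f_{\sigma_n} \|_{L_\infty} = O(\sigma_n^{-s})$ and $\| f_{\sigma_n} \|_{W_2^r} = O(\sigma_n^{r-s})$. Then $P_n f = P_n f_{\sigma_n}$ exactly, regardless of the weights, and only your second and third terms remain --- both of order $n^{-bs/r}$ at $\sigma_n \sim n^{b/r}$, exactly as in your bookkeeping. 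The separation-radius hypothesis $q_n \geq C n^{-b/r}$ enters not through a sampling inequality on the quadrature sum, but as the condition $\sigma_n \geq C_d/q_n$ required for this scattered-data interpolant to exist at bandwidth $\sigma_n$ with the stated error and norm-growth bounds (the diameter condition ${\rm diam}(X_1,\dots,X_n) \leq 1$ is likewise part of the hypotheses of those results). So your intuition that the separation scale must match the smoothing scale at $\delta \sim n^{-b/r}$ is correct, but the mechanism is interpolation killing the weight-dependent term, not a stability estimate taming the weights.
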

Consequences similar to those of Theorems \ref{theo:general} and \ref{theo:sob_weight} can be drawn for Theorem \ref{theo:sob_sepa}.
In particular, the rate in (\ref{eq:rate_sobs}) coincides with that of (\ref{eq:rate_sobw}) with $c = 1/2$. The assumption $q_n \geq C n^{-b/r}$ can be verified when points form equally-spaced grids in a compact subset of $\R^d$. 
In this case, the separation radius satisfies $q_n \geq C n^{-1/d}$ for some $C > 0$. 
As above, the optimal rate for this setting is $n^{-b}$ with $b = r/d$, which implies the separation radius satisfies the assumption as $n^{-b/r} = n^{-1/d}$.

\vspace{-2mm}

\section{Conclusions}
\label{sec:conclude}
\vspace{-2mm}
Kernel quadratures are powerful tools for numerical integration.
However, their convergence guarantees had not been established in situations where integrands are less smooth than assumed, which can happen in various situations in practice.
In this paper, we have provided the first known theoretical guarantees for kernel quadratures in such misspecified settings.

\subsubsection*{Acknowledgments}
We wish to thank the anonymous reviewers for valuable comments.
We also thank Chris Oates for fruitful discussions.
This work has been supported in part by MEXT Grant-in-Aid for Scientific Research on Innovative Areas (25120012).

\bibliographystyle{plain}
\bibliography{Bib_NIPS}

\appendix
\paragraph{Notation.}
Below we use the notation $L_1 := L_1(\R^d)$ and $L_2 := L_2(\R^d)$.
For a function $f \in L_1$, we denote by $\hat{f}$ its Fourier transform.
$B(0,R) := \{ x \in \R^d: \| x \| \leq R \}$ denotes the ball of radius $R > 0$ centered at $0$.
${\rm supp}(f)$ denotes the support of a function $f$.
\section{Proof of Theorem 1}

\begin{proof}

First we define the following integral operator $T: L_2(Q) \to L_2(Q)$:
\begin{equation}
Tf := \int k(\cd,x) f(x) dQ(x),\quad f \in L_2(Q).\nonumber
\end{equation}
From Lemmas 2.2 and 2.3 of \cite{SteSco12}, the condition $\int k(x,x)dQ(x) < \infty$ guarantees that $T$ is compact, positive, and self-adjoint.
Therefore $T$ allows an eigen decomposition
$ T = \sum_{i=1}^\infty \mu_i \left< e_i, \cd \right>_{L_2(Q)}e_i,$
where $\mu_1 \geq \mu_2, \dots \geq 0$ are eigenvalues of $T$ and $e_i$ is an eigenfunction associated with $\mu_i$.
Based on this decomposition, define an operator $T^\frac{\theta}{2}: L_2(Q) \to L_2(Q)$ by $T^\frac{\theta}{2} := \sum_{i=1}^\infty \mu_i^\frac{\theta}{2} \left< e_i,  \cd \right>_{L_2(Q)}e_i$.
Let $\mathcal{R}(T^\frac{\theta}{2})$ denote the range of $T^\frac{\theta}{2}$.
From Lemma 6.4 of \cite{SteSco12}, we have $\mathcal{R}(T^{\frac{\theta}{2}}) = \H^\theta$.
Therefore the assumption $f \in \H^\theta$ is equivalent to $f \in \mathcal{R}(T^{\frac{\theta}{2}})$.

For each $n$, define a constant $\lambda_n = n^{-2b + 2c - 1}$.
Define a function $f_{\lambda_n} \in \H$ by $f_{\lambda_n} := (T+\lambda_n I)^{-1} T f$, where $I$ denotes the identity. From Lemma 3 of \cite{SmaZho07} 
(see also Theorem 4 and Eq.~(7.10) of \cite{SmaZho05}) and $f \in \mathcal{R}(T^{\frac{\theta}{2}})$, $f_{\lambda_n}$ satisfies 
\begin{eqnarray}
 \| f - f_{\lambda_n} \|_{L_2(Q)}  &\leq& \lambda_n^{\frac{\theta}{2}} \| T^{- \frac{\theta}{2}} f \|_{L_2( Q )},\label{eq:regularized_approx} \\
 \| f_{\lambda_n} \|_\H &\leq& \lambda_n^{-\frac{1-\theta}{2}} \| T^{- \frac{\theta}{2}} f \|_{L_2( Q )} \label{eq:regularized_norm}.
\end{eqnarray}

It follows from triangle inequality that
\begin{eqnarray} \label{eq:gen_triangle}
\E[| P_n f - Pf |] \leq \underbrace{\E[ |P_n f - P_n f_{\lambda_n} | ]}_{(A)} +  \underbrace{\E[ | P_n f_{\lambda_n}  -  P f_{\lambda_n}| ]}_{(B)} +  \underbrace{\E[ | P f_{\lambda_n}  - P f | ]}_{(C)}. 
\end{eqnarray}

Below we separately bound terms $(A)$, $(B)$ and $(C)$.
\begin{eqnarray*}
(A) & = & \E\left[ \left| \sum_{i=1}^n w_i (f(X_i) - f_{\lambda_n}(X_i)) \right| \right] \\
& \leq & \E\left[ \left( \sum_{i=1}^n w_i^2 \right)^{1/2}   \left( \sum_{i=1}^n (f(X_i) - f_{\lambda_n} (X_i))^2 \right)^{1/2} \right] \quad (\because {\rm Cauchy{-}Schwartz})\\
& \leq & \left( \E\left[ \sum_{i=1}^n w_i^2  \right] \right)^{1/2}  \left( \E \left[ n \left( \frac{1}{n} \sum_{i=1}^n (f(X_i) - f_{\lambda_n} (X_i))^2 \right) \right] \right)^{1/2} \quad (\because {\rm Cauchy{-}Schwartz}) \\
& \le & \left( \E\left[ \sum_{i=1}^n w_i^2  \right] \right)^{1/2} n^{1/2} D \| f - f_{\lambda_n} \|_{L_2(Q)} \quad (\because {\rm Assumption}\ 1) \\
&=& O(n^{-c + 1/2} \lambda_n^{\theta/2} )  \quad (\because (\ref{eq:regularized_approx})) \\
&=& O(n^{- \theta b + (1/2 - c) (1 -  \theta) }).
\end{eqnarray*}
We now bound $(B)$ as
\begin{eqnarray*}
(B) & = & \E \left[ \left< m_{P_n} - m_P, f_{\lambda_n} \right>_{\H} \right] \quad (\because f_{\lambda_n} \in \H) \\
& \leq & \E \left[  \| m_{P_n} - m_P \|_{\H} \right] \| f_{\lambda_n} \|_{\H} \\
&=& O(n^{-b}  \lambda_n^{-\frac{1-\theta}{2}}) \quad (\because (\ref{eq:regularized_norm})) \\
&=& O(n^{- \theta b + (1/2 - c) (1 -  \theta) }).
\end{eqnarray*}

To bound $(C)$, let $r$ denote the (bounded) density function of $P$ with respect to $Q$: $dP(x) = r(x)dQ(x)$.
\begin{eqnarray*}
  (C) & \leq & \| f_{\lambda_n} - f \|_{L_1(P)}  \leq  \| f_{\lambda_n} - f \|_{L_2(P)}  \\
  &=& \left( \int (f_{\lambda_n} (x) - f(x) )^2 r(x) dQ(x) \right)^{1/2} \\
   & \leq & \| r \|_{L_\infty} \| f_{\lambda_n} - f \|_{L_2(Q)} = O(\lambda_n^{\theta/2})  \quad (\because (\ref{eq:regularized_approx})) .
\end{eqnarray*}
Note that $(C)$ decays faster than $(A)$ since $c \leq 1/2$, and so the rate is dominated by $(A)$ and $(B)$. The proof is completed by substituting these terms in (\ref{eq:gen_triangle}).
\end{proof}

\section{Proof of Theorem 2}

\subsection{Approximation in Sobolev spaces} \label{sec:sob_approx} 

In the proof of Theorem 2, we will use Proposition 3.7 of \cite{NarWar04}, which assumes the existence of a function $\psi: \R^d \to \R$ satisfying the properties in Lemma \ref{lemma:existence}.
Since the existence of this function is not proved in \cite{NarWar04}, we will first prove it for completeness. 
Lemma \ref{lemma:existence} is a variant of Lemma (1.1) of \cite{FraJawWei91}, from which we borrowed the proof idea.
\begin{lemma} \label{lemma:existence}
Let $s$ be any positive integer. Then there exists a function $\psi: \R^d \to \R$ satisfying the following properties: \\
(a) $\psi$ is radial; \\
(b) $\psi$ is a Schwartz function;\\
(c) ${\rm supp}(\hat{\psi}) \subset B(0,1)$; \\
(d) $\int_{\R^d} x^\beta \psi (x) dx = 0$ for every multi index $\beta$ satisfying $| \beta | := \sum_{i=1}^d \beta_i \leq s$. \\
(e) $\psi$ satisfies 
\begin{equation} \label{eq:Calderon_assump}
\int_0^\infty | \hat{\psi}(t \xi) |^2 \frac{dt}{t} = 1,\quad \forall \xi \in \R^d \backslash \{0\}.
\end{equation} 
\end{lemma}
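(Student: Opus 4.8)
The plan is to construct $\psi$ by prescribing its Fourier transform, since every one of the five requirements is most naturally read off on the frequency side. I would look for a $\psi$ whose Fourier transform is a smooth, compactly supported, radial function that in addition vanishes identically on a neighborhood of the origin. This single design choice makes (a)--(d) almost automatic and leaves only the normalization (e) to be arranged by a scalar rescaling.

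Concretely, fix a nonzero $C^\infty$ bump $g:[0,\infty)\to\R$ supported in $[1/4,1]$ (for instance a standard mollifier profile), and define $\hat{\psi}(\xi):=g(\|\xi\|)$. Because $g$ is smooth and compactly supported, $\hat{\psi}$ is a compactly supported $C^\infty$ function, hence Schwartz, so its inverse Fourier transform $\psi$ is Schwartz, giving (b); since $\hat{\psi}$ is radial and real-valued, $\psi$ is radial and real-valued, giving (a). The support of $g$ in $[1/4,1]$ forces ${\rm supp}(\hat{\psi})\subset\{1/4\le\|\xi\|\le1\}\subset B(0,1)$, which is (c). For (d), I would invoke the standard identity $\int_{\R^d} x^\beta \psi(x)\,dx = c_\beta\,D^\beta \hat{\psi}(0)$ (a constant times the $\beta$-th derivative of $\hat{\psi}$ at the origin): since $\hat{\psi}$ vanishes on a whole neighborhood of $0$, every such derivative is zero, so in fact \emph{all} moments vanish, a fortiori those with $|\beta|\le s$.

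The remaining point (e) is the only place a computation enters, and it collapses to a one-dimensional normalization. Writing $\hat{\psi}(t\xi)=g(t\|\xi\|)$ and substituting $u=t\|\xi\|$ (so that $dt/t=du/u$) gives, for every $\xi\neq 0$,
\[
\int_0^\infty |\hat{\psi}(t\xi)|^2\,\frac{dt}{t}=\int_0^\infty |g(u)|^2\,\frac{du}{u}=:I,
\]
a finite positive constant independent of $\xi$, since $g$ is bounded, compactly supported, and supported away from $0$. Replacing $g$ by $g/\sqrt{I}$ (which alters none of (a)--(d)) normalizes $I=1$, which is precisely (\ref{eq:Calderon_assump}). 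I do not anticipate a genuine obstacle: the crux is simply the observation that pushing the support of $\hat{\psi}$ off the origin simultaneously secures smoothness at the origin and annihilates all moments, after which (e) is a scalar rescaling and the $\xi$-independence of the Calder\'on integral is immediate from the change of variables. The only care needed is the routine check that the inverse Fourier transform sends this radial, real, compactly supported symbol to a real radial Schwartz function.
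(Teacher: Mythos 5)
Your proof is correct, but it takes a genuinely different route from the paper's. You prescribe $\hat{\psi}$ directly as a radial bump supported in an annulus $\{1/4 \leq \|\xi\| \leq 1\}$, bounded away from the origin; the paper instead starts from a mollifier-type bump $\hat{u}(\xi) = \exp(-\tfrac{1}{1-|\xi|})$ supported in the full ball $B(0,1)$ (nonvanishing at $0$) and applies a power of the Laplacian, $h := \Delta^m u$ with $2m > s$, so that $\hat{h}(\xi) = (-1)^m |\xi|^{2m}\hat{u}(\xi)$ has a zero of finite order $2m$ at the origin. The paper then needs two things your construction gets for free: the moment condition (d) holds only up to order $s$ because $D_\beta \hat{h}(0) = 0$ merely for $|\beta| \leq s < 2m$, and finiteness of the Calder\'on integral near $t = 0$ requires the separate estimate $|\hat{h}(t\xi)| = O(t^{2m})$. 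In your version, $\hat{\psi}$ vanishing identically on a neighborhood of $0$ annihilates \emph{all} moments at once (so your $\psi$ does not even depend on $s$), and the small-$t$ part of the Calder\'on integral vanishes outright since the integrand is supported in $t \in [\tfrac{1}{4\|\xi\|}, \tfrac{1}{\|\xi\|}]$. Both proofs establish $\xi$-independence of $\int_0^\infty |\hat{\psi}(t\xi)|^2 \tfrac{dt}{t}$ the same way --- scale invariance of the Haar measure $dt/t$ plus radiality, which you make explicit via the substitution $u = t\|\xi\|$ --- and both finish by a scalar renormalization that leaves (a)--(d) intact. Your moment identity $\int x^\beta \psi\, dx = c_\beta D^\beta \hat{\psi}(0)$ is the same one the paper invokes, and your remaining checks (radial real even symbol gives radial real $\psi$; compactly supported $C^\infty$ symbol gives Schwartz $\psi$) are routine and correctly handled. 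What the paper's route buys is fidelity to the Frazier--Jawerth--Weiss template it cites; what yours buys is a shorter argument with a stronger conclusion and no need for the auxiliary order-of-vanishing estimate.
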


\begin{proof}
Let $u:\R^d \to \R$ be the function whose Fourier transform is given by $\hat{u}(\xi) := \exp(- \frac{1}{1-| \xi |})$ for $| \xi | < 1$ and $\hat{u}(\xi) = 0$ for $| \xi | \geq 1$.
Then $\hat{u}$ is radial, Schwartz, and satisfies ${\rm supp}(\hat{u}) \subset B(0,1)$.
Also note that since $\hat{u}$ is symmetric $u$ is real-valued.

Define a function $h: \R^d \to \R$ by $\Delta^m u$ for some $m \in \mathbb{N}$ satisfying $m > s/2$, where $\Delta$ denotes the Laplace operator.
From $\hat{h}(\xi) = (-1)^m | \xi |^{2m} \hat{u}(\xi)$ (e.g.~p.~117 of \cite{Ste70}), it is immediate that $\hat{h}$ is radial and Schwartz (and so is $h$),  and that ${\rm supp}(\hat{h}) \subset B(0,1)$. 
Thus $h$ satisfies (a) (b) and (c).

We next show that $h$ satisfies (d) (which can instead be shown using the integration by parts).
Using the notation $p_\beta (x) := x^\beta$,
we have $\int_\R~ x^\beta h(x) dx = (2\pi)^{d/2} \widehat{p_\beta h}(0)$ since $h$ is Schwartz (see e.g.~Theorem 5.20 and p.75 of \cite{Wen05}).
Since $ \widehat{p_\beta h}(\xi) = i^{| \beta |} D_\beta \hat{h}(\xi)$ (where $D_\beta$ denotes the mixed partial derivatives with multi index $\beta$; see e.g.~Theorem 5.16 of \cite{Wen05}) and $\hat{h}(\xi) = (-1)^m | \xi |^{2m} \hat{u}$ with $2m > s \geq | \beta |$ and $\hat{u}$ being bounded, $\widehat{p_\beta h}(\xi)$ is bounded by a polynomial of order $2m - | \beta | > 0$ without a degree $0$ term.
Therefore $\widehat{p_\beta h}(0) = 0$, which implies (d).

Next, we show that $\int_0^\infty | \hat{h}(t \xi) |^2 \frac{dt}{t} < \infty$ for all $\xi \in \R^d \backslash \{0\}$.
Since $\hat{h}$ is bounded and ${\rm supp} (\hat{h}) \subset B(0,1)$, we have $\int_1^\infty | \hat{h}(t \xi) |^2 \frac{dt}{t} < \infty$.
Also, since $| \hat{h}(t\xi) | = O(t^{2m})$ as $t \to +0$ (which follows from $\hat{h}(t\xi) = (-1)^m | t\xi |^{2m} \hat{u}(t \xi)$ with $\hat{u}$ being bounded), we have $\int_0^1  | \hat{h}(t \xi) |^2 \frac{dt}{t} < \infty$.
Therefore $\int_0^\infty | \hat{h}(t \xi) |^2 \frac{dt}{t} < \infty$.

Note that since $\hat{h}$ is radial, $\int_0^\infty | \hat{h}(t \xi) |^2 \frac{dt}{t}$ only depends on the norm $|\xi|$. 
Furthermore, $\int_0^\infty | \hat{h}(t \xi) |^2 \frac{dt}{t}$ remains the same for different values of the norm $|\xi| > 0$ due to the property of the Haar measure $dt/t$.
In other words, there is a constant $0 < C < \infty$ satisfying $\int_0^\infty | \hat{h}(t \xi) |^2 \frac{dt}{t} = C$ for all $\xi \in \R^d \backslash \{0\}$. 
The proof completes by defining $\psi$ in the assertion by $\psi(x) := C^{-1/2} h(x)$.
\end{proof}

Note that $\psi$ being radial implies that $\hat{\psi}$ is radial, so $\hat{\psi}(t \xi)$ in (\ref{eq:Calderon_assump}) depends on $\xi$ only through its norm $| \xi |$.
Therefore henceforth we will use the notation $\hat{\psi}(t |\xi|)$ to denote $\hat{\psi}(t \xi)$, to emphasize its dependence on the norm.
Similarly, we use the notation $\hat{\psi}(t)$ to imply $\hat{\psi}(t \xi)$ for some $\xi \in \R^d$ with $| \xi | = 1$.

\paragraph{Approximation via Calder\'{o}n's  formula.}
If $\psi \in L_1$ is radial and satisfies (\ref{eq:Calderon_assump}),  Calder\'{o}n's  formula \citep[Theorem 1.2]{FraJawWei91} guarantees that any $f \in L_2$ can be written as
\begin{equation}
f(x) = \int_0^\infty ({\psi_t} * \psi_t * f)(x)\ \frac{dt}{t},\nonumber
\end{equation}
where $\psi_t(x) := \frac{1}{t^d}\psi(x/t)$.
This equality should be interpreted in the following $L_2$ sense: if $0 < \varepsilon < \delta < \infty$ and $f_{\varepsilon, \delta}(x) := \int_{\varepsilon}^\delta ({\psi_t} * \psi_t * f)(x)\frac{dt}{t}$, then $\| f - f_{\varepsilon,\delta}\|_{L_2} \to 0$ as $\varepsilon \to 0$ and $\delta \to \infty$.

Following Section 3.2 of \cite{NarWar04}, we now take $\psi$ from Lemma \ref{lemma:existence} and consider the following approximation of $f$:
\begin{equation} \label{eq:approx_g}
g_\sigma(x) := \int_{1/\sigma}^\infty ({\psi}_t * \psi_t * f)(x)\ \frac{dt}{t}.
\end{equation}

We will need the following lemma (which is not given in \cite{NarWar04}).
\begin{lemma} \label{eq:gsigma_norm}
Let $0 < s \leq r$ and $\sigma > 0$ be constants. If $f \in W_2^s$, the function $g_\sigma$ defined in (\ref{eq:approx_g}) satisfies
\begin{equation}
\| g_\sigma \|_{W_2^r} \leq (1 + \sigma^2)^{\frac{r-s}{2}} \| f \|_{W_2^s}.\nonumber
\end{equation}

\end{lemma}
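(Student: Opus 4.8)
The plan is to pass to the frequency domain, where $g_\sigma$ becomes a Fourier multiplier applied to $f$, and where that multiplier turns out to be bounded by $1$ and frequency-localized to the ball of radius $\sigma$. Throughout I use the Fourier characterization of the Sobolev norm, writing $\| h \|_{W_2^r}^2 = \int_{\R^d} (1+|\xi|^2)^r |\hat{h}(\xi)|^2 \, d\xi$ (an equivalent norm on $W_2^r$). Since $\psi_t(x) = t^{-d} \psi(x/t)$ has Fourier transform $\hat{\psi}(t\xi)$, the convolution theorem gives $\widehat{\psi_t * \psi_t * f}(\xi) = \hat{\psi}(t\xi)^2 \hat{f}(\xi)$, and because $\psi$ is real and radial, $\hat{\psi}$ is real so $\hat{\psi}(t\xi)^2 = |\hat{\psi}(t\xi)|^2$. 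Interchanging the $t$-integral over $[1/\sigma,\infty)$ with the Fourier transform then yields
\[
\hat{g_\sigma}(\xi) = m_\sigma(\xi)\, \hat{f}(\xi), \qquad m_\sigma(\xi) := \int_{1/\sigma}^\infty |\hat{\psi}(t\xi)|^2 \, \frac{dt}{t},
\]
the normalization being exactly the one pinned down by Calder\'{o}n's formula (so that the full-range integral reproduces the identity).

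Next I would extract two properties of the multiplier $m_\sigma$. First, property (e) of Lemma \ref{lemma:existence} gives $\int_0^\infty |\hat{\psi}(t\xi)|^2 \frac{dt}{t} = 1$, so $0 \leq m_\sigma(\xi) \leq 1$ for every $\xi$. Second, since ${\rm supp}(\hat{\psi}) \subset B(0,1)$, the integrand $|\hat{\psi}(t\xi)|^2$ vanishes unless $t|\xi| < 1$, i.e.\ $t < 1/|\xi|$; hence the integration range $[1/\sigma,\infty)$ contributes nothing once $1/\sigma \geq 1/|\xi|$, giving $m_\sigma(\xi) = 0$ whenever $|\xi| \geq \sigma$.

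Combining these facts gives a pointwise bound in frequency. On $\{|\xi| < \sigma\}$ we have $(1+|\xi|^2)^{r-s} \leq (1+\sigma^2)^{r-s}$ (here $r - s \geq 0$ is used) together with $m_\sigma(\xi)^2 \leq 1$, while on $\{|\xi| \geq \sigma\}$ the multiplier vanishes; in either case
\[
(1+|\xi|^2)^r \, m_\sigma(\xi)^2 \leq (1+\sigma^2)^{r-s} (1+|\xi|^2)^s .
\]
Multiplying by $|\hat{f}(\xi)|^2$ and integrating yields $\| g_\sigma \|_{W_2^r}^2 \leq (1+\sigma^2)^{r-s} \| f \|_{W_2^s}^2$, which is the claim after taking square roots.

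The Fourier bookkeeping and Plancherel steps are routine; the one point needing care is the justification of the multiplier identity, since Calder\'{o}n's formula holds only in the $L_2$ sense. I would handle this by working with the truncated integrals $f_{\varepsilon,\delta}$ introduced in the stated interpretation of the formula, where Fubini applies directly, establishing the identity there, and then passing to the limit using the $L_2$-convergence and the boundedness of $m_\sigma$. The conceptual crux is the pair of facts that $m_\sigma \leq 1$ and that $m_\sigma$ is supported in $B(0,\sigma)$; it is precisely the support localization that produces the clean factor $(1+\sigma^2)^{(r-s)/2}$.
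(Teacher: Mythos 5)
Your proposal is correct and follows essentially the same route as the paper's own proof: both compute $\hat{g}_\sigma(\xi) = m_\sigma(\xi)\hat{f}(\xi)$, observe via property (e) and the support condition on $\hat{\psi}$ that $0 \leq m_\sigma \leq 1$ with $m_\sigma$ vanishing for $|\xi| \geq \sigma$, and then bound $(1+|\xi|^2)^{r-s} \leq (1+\sigma^2)^{r-s}$ on $B(0,\sigma)$ in the Fourier characterization of the Sobolev norm. Your additional remark about justifying the multiplier identity through the truncated integrals $f_{\varepsilon,\delta}$ is a point of rigor the paper passes over silently, but it does not change the argument.
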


\begin{proof}
The Fourier transform of $g_\sigma$ can be written as
\begin{equation}
\hat{g}_\sigma(\xi) = \hat{f}(\xi) \int_{1/\sigma}^\infty | \hat{\psi}(t |\xi|) |^2 \frac{dt}{t} = \hat{f}(\xi)
\begin{cases}
0 \quad \quad {\rm if}\ | \xi | \geq \sigma, \\
\int_{| \xi | /\sigma}^1 | \hat{\psi}(t) |^2 \frac{dt}{t} \quad {\rm if}\ | \xi | < \sigma. 
\end{cases}\nonumber
\end{equation}
In other words, ${\rm supp}(\hat{g_\sigma}) \subset B(0,\sigma)$. 
Also note that for $\ | \xi | < \sigma$, we have $\int_{| \xi | /\sigma}^1 | \hat{\psi}(t) |^2 \frac{dt}{t} \leq \int_{0}^1 | \hat{\psi}(t) |^2 \frac{dt}{t} \leq 1$ from (\ref{eq:Calderon_assump}).
Therefore,
\begin{eqnarray*}
\| g_\sigma \|_{W_2^r}^2 
&=& \int_0^\sigma (1 + |\xi |^2)^r | \hat{g_\sigma}(\xi) |^2 d\xi \\
&\leq& \int_0^\sigma (1 + |\xi |^2)^r | \hat{f}(\xi) |^2 d\xi \\
&=& \int_0^\sigma (1 + |\xi |^2)^{r-s} (1 + |\xi |^2)^s | \hat{f}(\xi) |^2 d\xi \\ 
&\leq& (1 + \sigma^2)^{r-s} \int_0^\sigma  (1 + |\xi |^2)^s | \hat{f}(\xi) |^2 d\xi \\
&\leq& (1 + \sigma^2)^{r-s}  \| f \|_{W_2^s}^2.
\end{eqnarray*}

\end{proof}

\subsection{Proof of Theorem 2} \label{sec:proof_Th2}

We are now ready to prove Theorem 2.
\begin{proof}
For each $n$, define a constant $\sigma_n = n^{\frac{b - c + 1/2}{r}}$. 
Let $g_{\sigma_n} \in W_2^r$ be an approximation of $f$ defined in (\ref{eq:approx_g}) with $\sigma = \sigma_n$.
Then from Proposition 3.7 of \cite{NarWar04}, it satisfies
\begin{equation} \label{eq:sob_sup}
\| f - g_{\sigma_n} \|_{L_\infty} \leq C \sigma_n^{-s} \| f \|_{C_0^s},
\end{equation} 
where $C$ is a constant depending only on $s$ and $f$. 
From Lemma \ref{eq:gsigma_norm} in Appendix \ref{sec:sob_approx}, $g_{\sigma_n}$ also satisfies 
\begin{equation} \label{eq:sob_g_norm}
\| g_{\sigma_n} \|_{W_2^r} \leq (1+\sigma_n^2)^{\frac{r-s}{2}} \| f \|_{W_2^s}.
\end{equation}

By triangle inequality, we have
\begin{eqnarray} \label{eq:sob2_triangle}
| P_n f - Pf | \leq \underbrace{| P_n f - P_n g_{\sigma_n} |}_{(A)} +  \underbrace{| P_n g_{\sigma_n}  -  P g_{\sigma_n}|}_{(B)} +  \underbrace{| P g_{\sigma_n}  - P f |}_{(C)}.
\end{eqnarray}
Below we separately bound terms $(A)$, $(B)$ and $(C)$.
\begin{eqnarray*}
(A) &=& \sum_{i=1}^n w_i (f(X_i) - g_{\sigma_n}(X_i)) \leq \sum_{i=1}^n |w_i|  \| f - g_{\sigma_n} \|_{L_\infty} \\
&\leq&  n^{1/2} (\sum_{i=1}^n w_i^2)^{1/2} \| f - g_{\sigma_n} \|_{L_\infty} = O(n^{1/2-c} \sigma_n^{-s}) \quad (\because (\ref{eq:sob_sup})) \\
&=& O( n^{ -bs/r + (1/2 - c) (1 - s/r)   } ).
\end{eqnarray*}
\begin{eqnarray*}
(B) &=& \left< g_{\sigma_n},m_{P_n} - m_P \right>_{W_2^r} \quad (\because g_{\sigma_n} \in W_2^r) \\
&\leq& \left\|  g_{\sigma_n} \right\|_{W_2^r}  \left\|  m_{P_n} - m_P \right\|_{W_2^r} 
 =  O( \sigma_n^{r-s} n^{-b}) \quad (\because (\ref{eq:sob_g_norm}))  \\
&=& O( n^{ -bs/r + (1/2 - c) (1 - s/r)   }).
\end{eqnarray*}
\begin{eqnarray*}
(C) &=& \int (g_{\sigma_n}(x) - f(x) ) dP(x) \leq \| g_{\sigma_n} - f \|_{L_\infty} = O(\sigma_n^{-s}) \quad (\because (\ref{eq:sob_sup})).
\end{eqnarray*}
Since $c \leq 1/2$, note that $(C)$ decays faster than $(A)$ and so the rate is dominated by $(A)$ and $(B)$. 
The proof is completed by inserting these terms in (\ref{eq:sob2_triangle}).

\end{proof}

\section{Proof of Theorem 3} \label{sec:proof3}
\begin{proof}
Define a constant $C_d \sigma_n = n^{b/r}$, where $C_d := 24(\frac{\sqrt{\pi}}{3} \Gamma(\frac{d+2}{2}))^{\frac{2}{d+1}}$ with $\Gamma$ being the Gamma function, so that $\sigma_n \geq C_d / q_n$. 
Then from Theorem 3.5 and Theorem 3.10 of \cite{NarWar04}, there exists a function $f_{\sigma_n} \in W_2^r$ satisfying the following properties: 
\begin{eqnarray}
 f_{\sigma_n}(X_i) &=& f(X_i),\quad (i=1,\dots,n), \label{eq:Sob_interp} \\
\| f - f_{\sigma_n} \|_{L_\infty} &\leq& C_{s,d} \sigma_n^{-s} \max( \| f \|_{C_0^s}, \|f\|_{W_2^s}), \label{eq:Sob_approx}
\end{eqnarray}
where $C_{s,d}$ is a constant only depending on $s$ and $d$.

Moreover, from the discussion in p.~298 of \cite{NarWar04}, this function also satisfies 
\begin{equation}
\| f_{\sigma_n} \|_{W_2^r} \leq C \sigma_n^{r-s} \max( \| f \|_{C_0^s}, \|f\|_{W_2^s}) \label{eq:Sob_norm},
\end{equation}
where $C$ is a constant only depending on $s$, $d$, and the kernel $k$.

Triangle inequality yields
\begin{eqnarray} \label{eq:sob_triangle}
| P_n f - Pf | \leq \underbrace{| P_n f - P_n f_{\sigma_n} |}_{(A)} +  \underbrace{| P_n f_{\sigma_n}  -  P f_{\sigma_n}|}_{(B)} +  \underbrace{| P f_{\sigma_n}  - P f |}_{(C)}.
\end{eqnarray}

We separately bound terms $(A)$, $(B)$ and $(C)$.
\begin{equation*}
(A) = \sum_{i=1}^n w_i (f(X_i) - f_{\sigma_n}(X_i)) = 0. \quad (\because (\ref{eq:Sob_interp})).
\end{equation*}
\begin{eqnarray*}
(B) &=& \left< f_{\sigma_n}, m_{P_n} - m_P \right>_{W_2^r} \quad (\because f_{\sigma_n} \in W_2^r) \\
&\leq& \left\|  f_{\sigma_n} \right\|_{W_2^r}  \left\| m_{P_n} - m_P \right\|_{W_2^r} 
 =  O(\sigma_n^{r-s} n^{-b}) \quad (\because (\ref{eq:Sob_norm}))  \\
&=& O( n^{-bs/r} ) .
\end{eqnarray*}
\begin{eqnarray*}
(C) &=& \int (f_{\sigma_n}(x) - f(x) ) dP(x) \leq \| f_{\sigma_n} - f \|_{L_\infty} \\
&\leq& C_{s,d} \sigma_n^{-s} \max( \| f \|_{C_0^s}, \|f\|_{W_2^s}) \quad (\because (\ref{eq:Sob_approx})) \\
&=& O(n^{-bs/r}).
\end{eqnarray*}
The proof is completed by inserting these bounds in (\ref{eq:sob_triangle})

\end{proof}

\section{Experimental results for QMC lattice rules}

\begin{figure}[thbp]
\begin{center}
 \includegraphics[width=8cm]{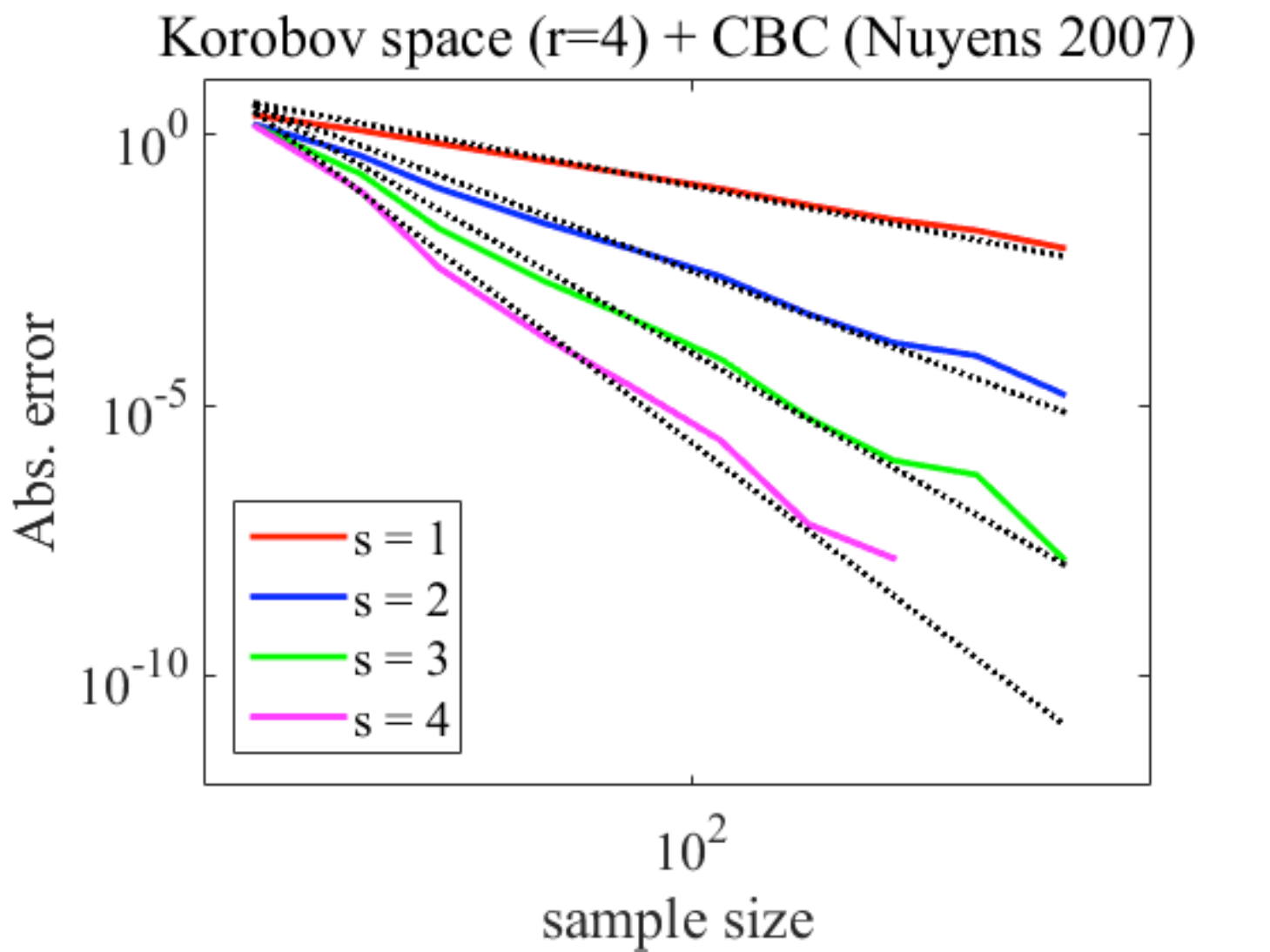}
\end{center}
\caption{Simulation results for QMC lattice rules by fast CBC construction \citep{Nuy07}}
\label{fig:Nuyen}
\end{figure}

We conducted simulation experiments with QMC lattice rules to show their adaptability to integrands less smooth than assumed.
The RKHS is the Korobov space of dimension $d = 2$.
For the construction of generator vectors, we employed a fast method for component-by-component (CBC) construction by \cite{Nuy07}, using the code provided at \verb|https://people.cs.kuleuven.be/~dirk.nuyens/fast-cbc/|.
This method constructs a generator vector for lattice points whose number $n$ is prime. 
Here we did not apply randomization to the generated points, so they were deterministic. 
We would like to note that this situation is not covered by our current theoretical guarantees; the results in Section 4 only apply to random points, and those in Section 5 apply to deterministic points with Sobolev spaces.

The results are shown in Figure \ref{fig:Nuyen}, where $r\ (=\alpha)$ denotes the assumed smoothness, and $s\ (=\alpha \theta)$ is the (unknown) smoothness of an integrand.
The straight lines are (asymptotic) upper-bounds in Theorem 1  (slope $- s$ and intercept fitted for $n \geq 2^4$), and the corresponding solid lines are numerical results (both in log-log scales). 
For $s = 4$ with large sample sizes,  underflow occurred with our computational environment as the errors were quite small, so we do not report these results.
The results indicate the adaptability of the QMC lattice rules for the less smooth functions (i.e.\ $s = 1,2,3$).

\end{document}